\newenvironment{talign*}
 {\csname align*\endcsname}
 {\endalign}
\newenvironment{talign}
{\align}
{\endalign}
\theoremstyle{plain}
\newtheorem{theorem}{Theorem}[section]
\newtheorem{proposition}[theorem]{Proposition}
\theoremstyle{definition}
\newtheorem{definition}[theorem]{Definition}
\theoremstyle{remark}
\setlist[itemize]{leftmargin=5mm,itemsep=0.5mm}
\setlist[enumerate]{leftmargin=*,itemsep=0.5mm}
\icmltitlerunning{Markovian Gaussian Process Variational Autoencoders}
\newcommand{\indep}{\perp \!\!\! \perp}
\newcommand{\bA}{\mathbf{A}}
\newcommand{\ba}{\mathbf{a}}
\newcommand{\bQ}{\mathbf{Q}}
\newcommand{\bm}{\mathbf{m}}
\newcommand{\bP}{\mathbf{P}}
\newcommand{\bs}{\mathbf{s}}
\newcommand{\bY}{\mathbf{Y}}
\newcommand{\bZ}{\mathbf{Z}}
\newcommand{\bK}{\mathbf{K}}
\newcommand{\bz}{\mathbf{z}}
\newcommand{\bH}{\mathbf{H}}
\newcommand{\bF}{\mathbf{F}}
\newcommand{\bG}{\mathbf{G}}
\newcommand{\bV}{\mathbf{V}}
\begin{document}

\twocolumn[
\icmltitle{Markovian Gaussian Process Variational Autoencoders}

% It is OKAY to include author information, even for blind
% submissions: the style file will automatically remove it for you
% unless you've provided the [accepted] option to the icml2022
% package.

% List of affiliations: The first argument should be a (short)
% identifier you will use later to specify author affiliations
% Academic affiliations should list Department, University, City, Region, Country
% Industry affiliations should list Company, City, Region, Country

% You can specify symbols, otherwise they are numbered in order.
% Ideally, you should not use this facility. Affiliations will be numbered
% in order of appearance and this is the preferred way.
\icmlsetsymbol{equal}{*}

\begin{icmlauthorlist}
\icmlauthor{Harrison Zhu}{yyy,equal}
\icmlauthor{Carles Balsells-Rodas}{yyy,equal}
\icmlauthor{Yingzhen Li}{yyy}
%\icmlauthor{}{sch}
%\icmlauthor{}{sch}
\end{icmlauthorlist}

\icmlaffiliation{yyy}{Imperial College London}
% \icmlaffiliation{comp}{Company Name, Location, Country}
% \icmlaffiliation{sch}{School of ZZZ, Institute of WWW, Location, Country}

\icmlcorrespondingauthor{Harrison Zhu}{harrisonzhu5080@gmail.com or hbz15@ic.ac.uk}
% \icmlcorrespondingauthor{Firstname2 Lastname2}{first2.last2@www.uk}

% You may provide any keywords that you
% find helpful for describing your paper; these are used to populate
% the "keywords" metadata in the PDF but will not be shown in the document
\icmlkeywords{Machine Learning, ICML}

\vskip 0.3in
]

% this must go after the closing bracket ] following \twocolumn[ ...

% This command actually creates the footnote in the first column
% listing the affiliations and the copyright notice.
% The command takes one argument, which is text to display at the start of the footnote.
% The \icmlEqualContribution command is standard text for equal contribution.
% Remove it (just {}) if you do not need this facility.

% \printAffiliationsAndNotice{}  % leave blank if no need to mention equal contribution
\printAffiliationsAndNotice{\icmlEqualContribution} % otherwise use the standard text.

\begin{abstract}
Sequential VAEs have been successfully considered for many high-dimensional time series modelling problems, with many variant models relying on discrete-time mechanisms such as recurrent neural networks (RNNs). On the other hand, continuous-time methods have recently gained attraction, especially in the context of irregularly-sampled time series, where they can better handle the data than discrete-time methods. One such class are Gaussian process variational autoencoders (GPVAEs), where the VAE prior is set as a Gaussian process (GP). However, a major limitation of GPVAEs is that it inherits the cubic computational cost as GPs, making it unattractive to practioners. In this work, we leverage the equivalent discrete state space representation of Markovian GPs to enable linear time GPVAE training via Kalman filtering and smoothing. For our model, Markovian GPVAE (MGPVAE), we show on a variety of high-dimensional temporal and spatiotemporal tasks that our method performs favourably compared to existing approaches whilst being computationally highly scalable.
\end{abstract}

\section{Introduction}

\begin{figure}[t]
\centering
\vspace{-2mm}
\begin{minipage}{0.37\textwidth}
\centering
\includegraphics[width = \columnwidth]{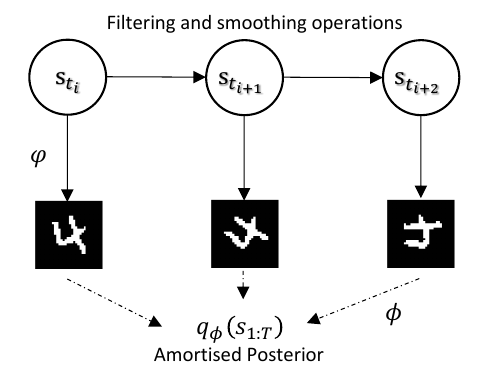}
\vspace{-4mm}

\end{minipage}
\hfill
\begin{minipage}{0.37\textwidth}
\centering
\includegraphics[width = \columnwidth]{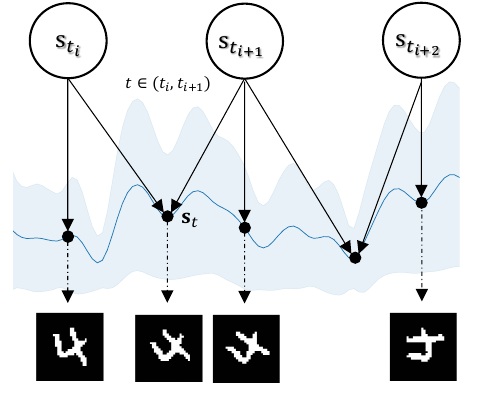}
\end{minipage}
\vspace{-2mm}
\caption{Illustration of the MGPVAE model. (Top) The state posterior $q_\phi(\bs_{1:T})$ is parameterised by encoder outputs and computed using filtering and smoothing. (Bottom) At prediction time, posterior predictive distributions can be calculated at any $t$.}
\label{fig:gpvae}
\vspace{-5mm}
\end{figure}

Modelling multivariate time series data has extensive applications in e.g., video and audio generation \citep{li2018disentangled, goel2022s}, climate data analysis \citep{ravuri2021skilful} and finance \citep{sims1980macroeconomics}. Among existing deep generative models for time series, a popular class of model is sequential variational auto-encoders (VAEs) \citep{chung2015recurrent,fraccaro2017disentangled,fortuin_gp-vae_2020}, which extend VAEs \citep{kingma_auto-encoding_2014} to sequential data. Originally proposed for image generation, a VAE is a latent variable model which encodes data via an \textbf{encoder} to a low-dimensional latent space and then decodes via a \textbf{decoder} to reconstruct the original data. To extend VAEs to sequential data, the latent space must also include temporal information (it is also technically possible to place temporal dynamics on the decoders \citep{chen2017variational}, but for our work we focus on the latent variable dynamics). Sequential VAEs accomplish this by modelling the latent variables as a multivariate time series, where many existing approaches define a state-space model which governs the latent dynamics. These state-space model-based sequential VAE approaches can be classified into two subgroups:  

\vspace{-3mm}
\begin{itemize}
\item Discrete-time:
The first approach relies on building a \textbf{discrete-time state-space model} for the latent variables. The transition distribution is often parameterised by a recurrent neural network (RNN) such as Long Short-Term Memory (LSTM; \citet{hochreiter1997long}) or Gated Recurrent Units (GRU; \citet{cho-etal-2014-properties}). Notable methods include the Variational Recurrent Neural Network (VRNN; \citet{chung2015recurrent}) and Kalman VAE (KVAE; \citet{fraccaro2017disentangled}). However, these approaches may suffer from training issues, such as vanishing gradients \citep{pascanu2013difficulty}, and may struggle with irregularly-sampled time series data \citep{rubanova2019latent}. 

\item Continuous-time:
The second approach involves \textbf{continuous-time} representations, where the latent space is modelled using a continuous-time dynamic model. A notable class of such methods are neural differential equations \citep{chen2018neural, rubanova2019latent, li2020scalable, kidger2022neural}, which model the latent variables using a system of differential equations, described by its initial conditions, drift and diffusion. As remarked in \citet{li2020scalable}, one can construct a neural stochastic differential equation (neuralSDE) that can be interpreted as an infinite-dimensional noise VAE. However, although these models can flexibly handle irregularly-sampled data, they also require numerical solvers to solve the underlying latent processes, which may cause training difficulties \citep{park2020vid}, memory issues \citep{chen2018neural, li2020scalable} and slow computation times. Similarly, but combining linear SDEs with Kalman filtering, Continuous Recurrent Units (CRU; \citet{schirmer2022modeling}) is a RNN that is also able to model continuous data. Finally, in the context of audio generation, S4-related models \citep{gu2021efficiently, goel2022s} also rely on continuous state spaces and have been shown to perform strongly.
\end{itemize}
\vspace{-2mm}

Another line of continuous-time approaches, related to neuralSDEs, that treat the latent multivariate time series as a random function of time, and model the random function as a tractable stochastic process, are Gaussian Process Variational Autoencoders (GPVAEs; \citet{casale_gaussian_2018,pearce_gaussian_2020,fortuin_gp-vae_2020,ashman_sparse_2020, jazbec_scalable_2021}) which model the latent variables using Gaussian processes (GPs) \citep{rasmussen2003gaussian}. As compared with dynamic model-based approaches which focus on modelling the latent variable transitions (reflecting local properties mainly), the GP model for the latent variables describes, in a better way, the global properties of the time series if a suitable stationary kernel is chosen, such as smoothness and periodicity. Therefore GPVAEs may be better suited for e.g., climate time series data which clearly exhibits periodic behaviour. Unfortunately GPVAEs are not directly applicable to long sequences as they suffer from $\mathcal{O}(T^3)$ computational cost, therefore approximations need to be made. Indeed, \citet{ashman_sparse_2020, fortuin_gp-vae_2020, jazbec_scalable_2021} proposed variational approximations based on sparse Gaussian processes \citep{titsias2009variational,hensman2013gaussian,hensman2015scalable} or recognition networks \citep{fortuin_gp-vae_2020} to improve the scalability of GPVAEs.

In this work we propose Markovian GPVAEs (MGPVAEs) to bridge state-space model-based and stochastic process based approaches of sequential VAEs, aiming to achieve the best in both worlds. Our approach is inspired by the key fact that, when the GP is over time, a large class of GPs can be written as a linear SDE \citep{sarkka2019applied}, for which there exists exact and unique solutions \citep{oksendal_stochastic_2003}. As a result, there exists an equivalent discrete linear state space representation of GPs. Therefore the dynamic model for the latent variables has both discrete and continuous-time representations. This brings the following key advantages to the latent dynamic model of MGPVAE: 

\vspace{-3mm}
\begin{itemize}
    \item The continuous-time representation allows the incorporation of inductive biases via the GP kernel design (e.g., smoothness, periodic and monotonic trends), to achieve better prediction results and training efficiency. It also enables modelling irregularly sampled time series data. 
    \item The equivalent discrete-time representation, which is linear, enables Kalman filtering and smoothing \citep{sarkka2019applied,adam_doubly_2020, chang_fast_2020,wilkinson_state_2020,wilkinson_sparse_2021, hamelijnck_spatio-temporal_2021} that computes the posterior distributions in $\mathcal{O}(T)$ time. As the observed data is assumed to come from non-linear transformations of the latent variables, we further apply site-based approximations \citep{chang_fast_2020} for the non-linear likelihood terms to enable analytic solutions for the filtering and smoothing procedures.
\end{itemize}
\vspace{-3mm}

In our experiments, We study much longer datasets ($T\approx 100)$ compared to many previous GPVAE and discrete-time works, which are only are of the magnitude of $T\approx 10$. We include a range of datasets that describe different properties of MGPVAE compared to existing approaches:

\vspace{-3mm}
\begin{itemize}
    \item We deliver competitive performance compared to many existing methods on corrupt and irregularly-sampled video and robot action data at a fraction of the cost of many existing models. 
    \item We extend our work to spatiotemporal climate data, where none of the discrete-time sequential VAEs are suited for modelling. We show that it outperforms traditional GP and existing sparse GPVAE models in terms of both predictive performance and speed.
\end{itemize}

\section{Background}
Consider building generative models for high-dimensional time series (e.g., video data). Here an observed sequence of length $T$ is denoted as $\bY_{t_1}\ldots,\bY_{t_T}\in\mathbb{R}^{D_y}$, where $t_i$ represents the timestamp of the $i$th observation in the sequence. Note that in general $t_i \neq i$ for irregularly sampled time series. As the proposed MGPVAE has both discrete state-space model based and stochastic process based formulations, below we introduce these two types of sequential VAEs and the key relevant techniques.

\paragraph{Sequential VAEs with state-space models:} Consider $t_i = i$ w.l.o.g., and assume each of the latent states in $\bZ_{1:T}=(\bz_{1:T}^1,\ldots, \bz_{1:T}^L)\in\mathbb{R}^{T\times L}$ has $L$ latent dimensions. Then the generative model is defined as
\begin{align}
    p(\bY_{1:T}, \bZ_{1:T}) = p(\bZ_{1:T}) \prod_{t=1}^T p(\bY_t|\bZ_t),
\label{eq:generative_model_general}
\end{align}
where we choose $p(\bY_t | \bZ_t) = \mathcal{N}(\bY_t; \varphi(\bZ_t), \sigma^2 \mathbf{I})$ to be a multivariate Gaussian distribution, and $\varphi:\mathbb{R}^L\rightarrow\mathbb{R}^{D_y}$ is decoder network that transforms the latent state to the Gaussian mean. The prior $p(\bZ_{1:T})$ is defined by the transition probabilities, e.g., $p(\bZ_{1:T}) = \prod_{t=1}^T p(\bZ_t | \bZ_{<t})$. Training is done by maximising the variational lower bound $\mathcal{L}$ \citep{ranganath2014black}:
\begin{talign}
\label{eqn:ELBO}
 \log p(\bY_{1:T})\geq& \sum_{t=1}^T \mathbb{E}_{q(\bZ_{1:T})}[\log p(\bY_t | \bZ_t)] \nonumber\\
 &- \text{KL}(q(\bZ_{1:T} | \bY_{1:T}) || p(\bZ_{1:T})):=\mathcal{L},
\end{talign}
where $q(\bZ_{1:T} | \bY_{1:T})$ is the approximate posterior parameterised by an encoder network. Often this $q$ distribution is defined by mean-field approximation over time, i.e.,~$q(\bZ_{1:T} | \bY_{1:T}) = \prod_{t=1}^T q(\bZ_t | \bY_t)$, or by using transition probabilities as well, e.g., $q(\bZ_{1:T} | \bY_{1:T}) = \prod_{t=1}^T q(\bZ_t | \bZ_{t-1}, \bY_{\leq t})$. Below we also write $q(\bZ_{1:T}) = q(\bZ_{1:T} | \bY_{1:T})$ to simplify notation.

\paragraph{Gaussian Process Variational Autoencoders:} 
A Gaussian process is a stochastic process denoted by $\bZ\sim\mathcal{GP}(0,k)$, where $k:\mathbb{R}\times\mathbb{R}\rightarrow\mathbb{R}$ is the kernel or covariance function that specifies the similarity between two time stamps. This allows us to explicitly enforce inductive biases or global behaviour. For instance, if $\bY_t$ is a periodic system, then we may use a periodic kernel $k$ or a longer initial kernel lengthscale to incorporate this knowledge in the model; if the underlying process is smooth, then a kernel can also be chosen so that it induces smooth functions \citep{kanagawa_gaussian_2018}. 

GPVAEs \citep{casale_gaussian_2018,pearce_gaussian_2020,fortuin_gp-vae_2020,ashman_sparse_2020, jazbec_scalable_2021} define the decoder network $\varphi$ and the conditional distribution $p(\bY_t | \bZ_t)$ in the same way as presented above. However, instead of using transition probabilities, a GPVAE places a multi-output GP prior on the latent variables $\{ \bZ_t \}$: $\bZ_t\sim \mathcal{GP}(0, \mathbf{k})$, where one can choose the kernel of the multi-output GP to be $\mathbf{k}=\mathbf{I}\otimes k$, i.e., the output GPs across dimensions share the same kernel $k$. However, each dimension may also be induced with separate kernels $k^1,\ldots,k^L$, which we adopt in this work, giving block diagonal kernel matrices.

Again we use the variational lower-bound (Eq.~\eqref{eqn:ELBO} when $t_i = i$) as the training objective, but with a different approximate posterior $q$. Some examples include GPVAE \citep{pearce_gaussian_2020, fortuin_gp-vae_2020}
\begin{talign*}
q(\bZ_{1:T}) = \prod_{l=1}^L N(\bZ^l_{1:T}; \tilde{\bY}_{1:T}^l, \tilde{\bV}_{1:T}^l),
\end{talign*}
where $(\tilde{\bY}^l_t,\tilde{\bV}^l_t)_{l=1}^L =(\mu_\phi^l(\bY_t), \mathbf{\Sigma}^l_\phi(\bY_t))_{l=1}^L=\phi(\bY_t)$ are outputs of the encoder network $\phi$. This corresponds to a mean-field approximation over latent dimensions instead of time. To avoid direct parameterisation of the full covariance matrix $\mathbf{\Sigma}^l_\phi(\bY_{1:T})\in\mathbb{R}^{T\times T}$ which can be expensive for long sequences, \citet{fortuin_gp-vae_2020} proposed a banded parameterisation of the precision matrix \citep{blei2006dynamic,bamler2017dynamic}, reducing both the time and memory complexity to $\mathcal{O}(T)$. However, this choice makes it more difficult to work with irregularly-sampled data and previous works only focused on corrupt video frames. 

Another more flexible option is to use sparse GP approximations with inducing points \citep{jazbec_scalable_2021}:

\vspace{-5mm}
\begin{talign*}
q(\bZ_{1:T}) &= \prod_{l=1}^L p(\bZ^l_{1:T}|\bG^l_m)q(\bG^l_m),\\
q(\bG^l_m) &= N(\bG^l_m | \mathbf{m}^l_m, \mathbf{A}_m^l),\\
\mathbf{S}^l&=\mathbf{K}_{mm}^l + \mathbf{K}_{mT}^l \text{diag}(\tilde{\bV}^l_{1:T})^{-1}\mathbf{K}_{Tm}^l,
\\
\mathbf{m}^l_m&= \mathbf{K}_{mm}^l (\mathbf{S}^l)^{-1}\mathbf{K}_{mT}^l\text{diag}(\tilde{\bV}^l_{1:T})^{-1}\tilde{\bY}^l_{1:T},\\
\mathbf{A}^l_m&=\mathbf{K}_{mm}^l(\mathbf{S}^l)^{-1}\mathbf{K}_{mm}^l,
\end{talign*}
\vspace{-5mm}

where $p(\bZ^l_{1:T}|\mathbf{G}_m^l)$ is the standard multivariate Gaussian conditional distribution
and $[\mathbf{K}_{mT}^l]_{ij}:=k^l(\mathbf{U}_i, j)$ with $i=1,\ldots,m$ and $j=1,\ldots,T$, and $[\mathbf{K}_{mm}^l]_{ij}:=k^l(\mathbf{U}_i, \mathbf{U}_j)$ with $i,j=1,\ldots,m$, for pre-determined inducing time locations $\mathbf{U}=[\mathbf{U}_1,\ldots,\mathbf{U}_m]^\intercal$. However, the time complexity scales with $\mathcal{O}(m^3 + m^2T)$, where in practice to attain good performance $m=\mathcal{O}(\log T)$ \citep{burt2019rates}, and therefore the complexity increases massively when dealing with longer time series.

\paragraph{Markovian Gaussian Processes:} Interestingly, the banded parameterisation coincides with the structure of the Markovian GP state space $\mathbf{s}_t$. w.l.o.g. suppose $\bZ_t$ is one-dimensional in this subsection, then with a conjugate likelihood (e.g.~linear Gaussian) $p(\bY_t|\bZ_t)$ and a Markovian kernel $k$ \citep{sarkka2019applied}, we can write the GP regression problem as an Itô SDE of latent dimension $d$ 
\begin{talign}
  \label{eqn:temporal_sde}
  &\text{d}\mathbf{s}_t = \mathbf{F}\mathbf{s}_t \text{d}t + \mathbf{L}\text{d}B_{t},\quad \bZ_t=\mathbf{H}\mathbf{s}_t,\nonumber\\
  & \bY_t|\bZ_t\sim p(\bY_t| \bZ_t),
\end{talign}
where $\mathbf{F}\in\mathbb{R}^{d\times d}, \mathbf{L}\in\mathbb{R}^{d\times e}, \mathbf{H}\in\mathbb{R}^{1\times d}$ are the feedback, noise effect and emission matrices, respectively, and $B_t$ is an $e$-dimensional (correlated) Brownian motion with diffusion $\mathbf{Q}_c$. $\bs_0\sim \mathcal{N}(0, \bP_\infty)$, where $\bP_\infty$ is the stationary state covariance, which satisfies the Lyapunov equation \citep{solin_stochastic_2016}. The state is typically the $d$ derivatives $\mathbf{s}_t=(\bZ_t,\bZ^{(1)}_t,\ldots,\bZ_t^{(d-1)})^\intercal$ with the subsequent emission matrix $\mathbf{H}=(1,0,\ldots,0)$. 

The linear SDE in Eq.~\eqref{eqn:temporal_sde} admits a unique closed form solution, allowing the recursive updates of $\mathbf{s}_{t_{i+1}}$ given $\mathbf{s}_{t_i}$:
\begin{talign}
\label{eqn: markov_gp}
  &\mathbf{s}_{t_{i+1}} = \mathbf{A}_{i,i+1}\mathbf{s}_{t_i} + \mathbf{q}_i,\quad \mathbf{q}_i\sim\mathcal{N}(0, \mathbf{Q}_{i,i+1}), \nonumber \\ 
  &\bZ_{t_i} = \mathbf{H}\mathbf{s}_{t_i},\quad \bY_{t_i}|\bZ_{t_i}\sim p(\bY_{t_i}| \bZ_{t_i})
\end{talign}
%where 
\begin{talign*}
  &\text{with } \mathbf{A}_{i,i+1}=e^{\Delta_i\mathbf{F}},\quad \text{where }\Delta_i=t_{i+1}-t_i,\\
  &\mathbf{Q}_{i,i+1}=\int_{t_0}^{\Delta_i+t_0}e^{(\Delta_i+t_0-\tau)\mathbf{F}}\mathbf{L}\mathbf{Q}_c\mathbf{L}^\intercal  [e^{(\Delta_i+t_0-\tau)\mathbf{F}}]^\intercal \text{d}\tau.
\end{talign*}
Note that the $\mathbf{Q}_{i,i+1}$ can be easily obtained in closed form. See Appendix~\ref{appendix:mgpvae} for a detailed explanation. For conjugate likelihood $p(\bY_{t}| \bZ_{t})\equiv p(\bY_{t}|\bs_{t})$, we can use the recursive Kalman filtering and smoothing equations (also known as the forward-backward equations) to obtain the posterior distribution \citep{sarkka2019applied} $p(\bs_{t}|\bY_{1:T})$ with $\mathcal{O}(Td^3)$ complexity. The corresponding filter prediction, filtering and smoothing equations are:

\vspace{-6mm}
\begin{talign} 
\label{eqn:filtering_smoothing}
p(\bs_{t+1}|\bY_{1:t}) &= \int p(\bs_{t+1}|\bs_t)p(\bs_t|\bY_{1:t}) \text{d}\bs_t, \\
p(\bs_t|\bY_{1:t}) &= \frac{1}{\ell_t}p(\bY_t|\bs_t)p(\bs_t|\bY_{1:t-1}),\nonumber\\
    p(\bs_{t}|\bY_{1:T})&=p(\bs_{t}|\bY_{1:t})\int \frac{p(\bs_{t+1}|\bs_t)p(\bs_{t+1}|\bY_{1:T})}{p(\bs_{t+1}|\bY_{1:t})} \text{d}\bs_{t+1}, \nonumber
\end{talign}
\vspace{-6mm}

where $\ell_t=\int p(\bY_t|\bs_t)p(\bs_t|\bY_{1:t-1}) \text{d}\bs_t$ is tractable as the integrand is a product of Gaussians. If $p(\bY_{t}|\bs_{t})$ is non-conjugate (e.g. a Poisson distribution), then the posterior cannot be obtained analytically.

The size of $d$ depends on the kernel. For example, the Matern-$\sfrac{3}{2}$ kernel yields $d=2$, since the GP sample paths lie in an RKHS that is norm equivalent to a space of functions with 1 derivative \citep{kanagawa_gaussian_2018}. The periodic or quasi-periodic kernels \citep{Solin2014ExplicitLB} may yield larger $d$'s. However, in comparison to sparse Gaussian process approximations in \citet{ashman_sparse_2020, jazbec_scalable_2021} that have complexity $\mathcal{O}(m^3+m^2T)$, where $m$ is the number of inducing points, $d$ does not depend on $T$ (unlike sparse GPs \citep{burt2019rates} that depend on $\mathcal{O}(\log^D T)$, where $D$ is the time variable dimension) and thus does not need to grow as $T$ increases.

\section{Markovian Gaussian Process Variational Autoencoders}
In this section, we propose Markovian GPVAEs (MGPVAEs) and a corresponding variational inference scheme for model learning.

\subsection{Model} 
Let $L$ be the dimensionality of the latent variables and $k^l$ be the kernel for the $l$th channel with state dimension $d_l$. Then we have a total state space dimension of $\sum_{l=1}^L d_l$. Let $\varphi:\mathbb{R}^L\rightarrow \mathbb{R}^{D_y}$ be the decoder network. Then the generative model, under the linear SDE form of Markovian GP in the latent space, is (with $\bZ_t=(\bZ_t^1,\ldots,\bZ_t^L)^\intercal$)

\vspace{-6mm}
\begin{talign}
  \label{eqn:mgpvae}
  &\text{d}\mathbf{s}^l_t = \mathbf{F}^l\mathbf{s}^l_t \text{d}t + \mathbf{L}^l\text{d}B_{t}^l,\quad \bZ_t^l=\mathbf{H}^l\mathbf{s}^l_t,\quad l=1,\ldots, L\nonumber\\
  & \bY_t|\bZ_t\sim p(\bY_t| \bZ_t)\equiv p(\bY_t| \varphi(\bZ_t)), 
\end{talign}
\vspace{-6mm}

which equivalently becomes the linear discrete state space model with nonlinear likelihood
\begin{talign}
  &\mathbf{s}_{t_{i+1}}^l = \mathbf{A}^l_{i,i+1}\mathbf{s}^l_{t_i} + \mathbf{q}^l_i,\quad \mathbf{q}^l_i\sim\mathcal{N}(0, \mathbf{Q}^l_{i,i+1}),\nonumber\\
  & \bZ_t^l=\mathbf{H}^l\mathbf{s}^l_t, \quad \bY_t|\bZ_t\sim p(\bY_t| \bZ_t)\equiv p(\bY_t| \varphi(\bZ_t)).
\end{talign}
Note that the transformation $\bZ_t^l=\bH^l\bs_t^l$ is deterministic, and the stochasticity arises from the $\bs_t$ variables.

\subsection{Variational inference} 
Suppose $q(\mathbf{s})$ is the approximate posterior over $\bs:=\bs_{1:T}\in\mathbb{R}^{T\times Ld}$. Then minimising $\text{KL}(q(\mathbf{s})||p(\mathbf{s}|\bY))$ is equivalent to maximising the lower bound $\mathcal{L}:=\sum_{t=1}^T\mathbb{E}_{q(\bs_t)}\log p(\bY_t|\varphi(\bZ_t)) - \text{KL}(q(\mathbf{s})||p(\mathbf{s}))$.
Note that since $\bZ_t$ is a linear transformation or reparameterisation of $\bs_t$, the KL-divergence is between the posterior and prior distributions of $\bs_t$. We wish to compute $q(\bs)$ in linear time using Kalman filtering and smoothing. However, due to the presence of a nonlinear decoder network $\varphi$ in the likelihood, it is no longer possible to obtain the exact posterior $p(\bs|\bY)$ due to non-conjugacy. However, if we approximate the likelihood $p(\bY_t|\varphi(\bZ_t))$ with Gaussian sites, as is done in \citet{pearce_gaussian_2020,ashman_sparse_2020, jazbec_scalable_2021, chang_fast_2020}, Kalman filtering and smoothing can be performed as conjugacy is reintroduced in the filtering and smoothing equations.

We propose the Gaussian-site approximation
\begin{talign*}
q(\bs)\propto p(\bs) \prod_{l=1}^L\prod_{t=1}^T N(\tilde{\bY}_t^l| \mathbf{H}^l\bs^l_t, \tilde{\bV}^l_t),    
\end{talign*}
where $\tilde{\bY}^l_t\in\mathbb{R}$ and $\tilde{\bV}^l_t\in\mathbb{R}$ for $l=1,\ldots,L$. Instead of optimising $\tilde{\bY}_t$ and $\tilde{\bV}_t$ as free-form parameters using conjugate-computation variational inference \citep{khan2017conjugate}, we encode them using outputs of an encoder network $\phi$ i.e. $(\tilde{\bY}^l_t,\tilde{\bV}^l_t)_{l=1}^L=\phi(\bY_t)$. In addition, we approximate the potentially high-dimensional data likelihood using a likelihood comprising of low-dimensional state space variables.

Then with straightforward computations, 
\begin{talign*}
    &\text{KL}(q(\mathbf{s})||p(\mathbf{s})) = \mathbb{E}_{q(\bs)} \log \frac{q(\bs)}{p(\bs)} \\
    &= \mathbb{E}_{q(\bs)} \log \frac{p(\bs)\prod_{l=1}^L\prod_{t=1}^T N(\tilde{\bY}_t^l| \mathbf{H}^l\bs^l_t, \tilde{\bV}^l_t)}{p(\bs) \int p(\bs)\prod_{l=1}^L\prod_{t=1}^T N(\tilde{\bY}_t^l| \mathbf{H}^l\bs^l_t, \tilde{\bV}^l_t) \text{d}\bs}\\
    &= \mathbb{E}_{q(\bs)}\sum_{l=1}^l\sum_{t=1}^T \log N(\tilde{\bY}_t^l| \mathbf{H}^l\bs^l_t, \tilde{\bV}^l_t)\\
    &- \log \mathbb{E}_{p(\bs)} \prod_{t=1}^T\prod_{l=1}^L N(\tilde{\bY}_t^l| \mathbf{H}^l\bs^l_t, \tilde{\bV}^l_t).
\end{talign*}
The ELBO (\ref{eqn:ELBO}) thus becomes
\begin{align*}
  &\mathcal{L}=
  \underbrace{\log\mathbb{E}_{p(\bs)} \left[\prod_{t=1}^T \prod_{l=1}^L N(\tilde{\bY}^l_t| \mathbf{H}^l\bs^l_t, \tilde{\bV}^l_t) \right]}_{\text{E3}}\\
  &+\sum_{t=1}^T \mathbb{E}_{q(\bs_t)}\bigg[ \underbrace{\log p(\bY_t|\varphi(\bZ_t))}_{\text{E2}} -\sum_{l=1}^L\underbrace{\log N(\tilde{\bY}^l_t| \mathbf{H}^l\bs^l_t, \tilde{\bV}^l_t)}_{\text{E1}}\bigg].
\end{align*}
(E1) $q(\bs_t)$ can be obtained using the Kalman smoothing distributions and can be computed in linear time (see the Kalman filtering and smoothing equations in Appendix~\ref{alg:filtering_smoothing}). The reason is because $q(\bs)$ is constructed by replacing the likelihood in Eq.~(\ref{eqn: markov_gp}) with Gaussian sites, making it possible to analytically evaluate the filtering and smoothing equations (\ref{eqn:filtering_smoothing}). Therefore using the same calculations as \citet{chang_fast_2020, hamelijnck_spatio-temporal_2021}, the first term in the ELBO can be computed analytically:
\begin{talign*}
  &\mathbb{E}_{q(\bs_t)} [\text{E1}] =- \frac{1}{2}\log |2\pi \tilde{\bV}^l_t| - \frac{1}{2}(\tilde{\bY}^l_t)^\intercal (\tilde{\bV}^l_t)^{-1}  \tilde{\bY}_t^l\\ & + (\tilde{\bY}^l_t)^\intercal \bH^l\bm_t^{l,s} - \frac{1}{2}[\text{Tr}((\tilde{\mathbf{V}}^l)^{-1} \bH^l \bP_t^{l,s}(\bH^l)^\intercal)] \\
  &+ (\bm_t^{l,s})^\intercal (\bH^l)^\intercal \bH^l \bm_t^{l,s},
\end{talign*}
where $\bm_t^{l,s}$ and $\bP_t^{l,s}$ are the smoothing mean and covariances respectively at time $t$.

(E2) is intractable but we can estimate it using Monte Carlo with $K$ samples $\bs_{t, j}\sim q(\bs_t)$, and thus samples $\bZ_{t,j}=(\bH^1\bs_{t, j}^1,\ldots,\bH^L\bs_{t, j}^L)^\intercal$:
\begin{talign*}
  \mathbb{E}_{q(\bs_t)}\log p(\bY_t|\varphi(\bZ_t)) \approx \frac{1}{K}\sum_{j=1}^K \log p(\bY_t| \varphi(\bZ_{t,j})).
\end{talign*}

(E3) is the log partition function of $q(\bs)$, which is also the log marginal likelihood of the approximate model $\sum_{l=1}^L\log p(\tilde{\bY}^l)$. Note that the latent channels are independent of each other, allowing us to sum over the log marginal likelihood over each channel.
We can further decompose each term of the sum into
\begin{talign*}
\log p(\tilde{\bY}^l) &= \log p(\tilde{\bY}_1^l)\prod_{t=2}^T p(\tilde{\bY}^l_t|\tilde{\bY}^l_{1:t-1})\\
&=\sum_{t=1}^T \log\mathbb{E}_{p(\bs_t^l|\tilde{\bY}^l_{1:t-1})} N(\tilde{\bY}^l_t; \mathbf{H}^l\bs^l_t, \tilde{\bV}^l_t),
% &=\sum_{t=1}^T \tilde{\ell}_t,
\end{talign*}
where $p(\bs_t^l|\tilde{\bY}^l_{1:t-1})$ is the predictive filter distribution obtained with Kalman filtering. Fortunately, $\log p(\tilde{\bY}^l)$ can be computed during the filtering stage (see Algorithm~\ref{alg:filtering_smoothing} for a full breakdown of Kalman filtering and smoothing). In addition, a graphical representation of the Markovian GPVAE is in Figure~\ref{fig:gpvae}.

\subsection{Spatiotemporal Modelling} 
Spatiotemporal modelling is an important task with many real world applications \citep{cressie2015statistics}. Traditional methods such as kriging, or Gaussian process regression, incurs cubic computational costs and are even more costly and difficult when multiple variables need to be modelled jointly. GPVAEs may ameliorate this issue by effectively simplifying the task via an encoder-decoder model and has been proven to be effective in \citet{ashman_sparse_2020}. Following section 4.2 of \citet{hamelijnck_spatio-temporal_2021}, given a separable spatiotemporal kernel $k(r,t,r',t')=k_r(r,r')k_t(t,t')$, it is straightforward to extend MGPVAE to model spatiotemporal data, which will be make it a highly scalable spatiotemporal model. We consider the model:
\begin{talign*}
    \bZ(r,t)\sim \mathcal{GP}(0,k),\bY(r,t)|\bZ(r,t)\sim p(\bY(r,t)| \varphi(\bZ(r,t))),
\end{talign*}
where for classical kriging \citep{cressie2015statistics}, $\varphi$ is the identity map and $\bZ(r,t)$ is of the same dimensionality as $\bY(r,t)$.

For convenience of notation, we avoid introducing subscripts $l$ and only write down 1 latent dimension with $k$. Suppose we have $N_s$ spatial coordinates observed over time, denoted by the spatial matrix $\mathbf{R}\in\mathbb{R}^{N_r\times D_x}$, it is possible to rewrite the GP regression model by stacking the states for each spatial location on top of each other to get:
\begin{talign}
  &\mathbf{s}_{t_{i+1}} = \mathbf{A}_{i,i+1}\mathbf{s}_{t_i} + \mathbf{q}_i,\quad \mathbf{q}_i\sim\mathcal{N}(0, \mathbf{Q}_{i,i+1}), \nonumber \\ 
  &\bY_{t}|\bZ_{t}\sim p(\bY_{t}| \varphi(\bZ_{t})),
\end{talign}
where $\bs_t=[\bs_t(r_1),\ldots,\bs_t(r_{N_s})]^\intercal$, $\bZ_t=[\mathbf{L}_{\mathbf{R}\mathbf{R}}^{r}\otimes \bH^{t}] \bs_{t_i}$ and $\mathbf{A}_{i,i+1}=\mathbf{I}_{N_r}\otimes \bA_{i,i+1}^{t}$, $\bQ_{i,i+1}=\mathbf{I}_{N_r}\otimes \bQ_{i,i+1}^{t}$,  $\mathbf{K}^r_{\mathbf{R}\mathbf{R}}=\mathbf{L}_{\mathbf{R}\mathbf{R}}^{r}(\mathbf{L}_{\mathbf{R}\mathbf{R}}^{r})^\intercal$, where superscripts $t$ and $r$ indicate the temporal state space and spatial kernel matrices respectively. The graphical model for MGPVAE is shown in Figure~\ref{fig:st_gpvae}, demonstrating how the states for each spatial location are independently filtered and smoothed over time, and then spatially mixed by the emission matrix. Lastly, we approximate the likelihood with a mean-field amortised approximation $\prod_{t=1}^T N(\tilde{\bY}_{t}|[\mathbf{L}_{\mathbf{R}\mathbf{R}}^{r}\otimes \bH^{t}] \bs_t, \tilde{\bV}_t)$, where $\tilde{\bY}_{t},\tilde{\bV}_{t}\in\mathbb{R}^{N_x}$. See Appendix~\ref{appendix:spatiotemporal} for further details. 

\begin{figure}[t]
\centering
\vspace{0mm}
\begin{minipage}{0.45\textwidth}
\centering
\includegraphics[width = \columnwidth]{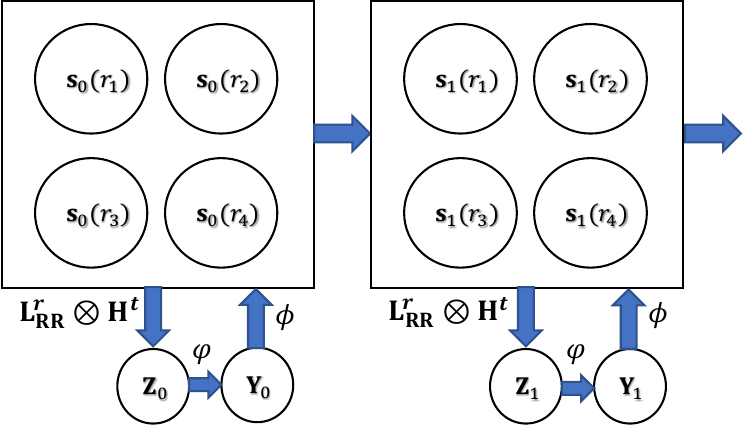}
\vspace{-4mm}
\end{minipage}
\vspace{-3mm}
\caption{Graphical model of the separable spatiotemporal MGPVAE model. The temporal dynamics of the states $\bs_t(r)$ at each location $r$ are independently handled. At each time step $t$, these states are spatially mixed to produce $\bZ_t$, which is then transformed by a non-linear mapping to $\bY_t$. }
\vspace{-4mm}
\label{fig:st_gpvae}
\end{figure}

\subsection{Computational Complexity and Storage}
\label{subsection:computational}
\paragraph{Computational Complexity: }For simplicity let us assume that each channel has the same kernel but is modelled independently. Then the computational complexity of MGPVAE is $\mathcal{O}(Ld^3T)$. For GPVAE, it is also $\mathcal{O}(Ld^3T)$; for SVGPVAE, $\mathcal{O}(L(Tm^2 + m^3))$ with $m$ inducing points. For KVAE, VRNN and CRU, the complexity is $\mathcal{O}(LT)$, but there may be large big-O constants due to the RNN network sizes. For neuralODEs and neuralSDEs, the complexity is linear with respect to the number of discretisation steps, which can potentially be much larger than $T$. 

For spatiotemporal modelling, the computational complexity for MGPVAE will be $\mathcal{O}(Ld^3 T N_r^3)$ in this case, but can be further lowered to $\mathcal{O}(Ld^3 T (N_rm^2 + m^3))$ if we sparsify the spatial domain by using $M$ spatial inducing points. For SVGPVAE, it is $\mathcal{O}(L(TN_rm^2 + m^3))$ with $m$ inducing points over space and time.

\paragraph{Storage: } The storage requirements for MGPVAE and SVGPVAE are $\mathcal{O}(Td^2)$ and $\mathcal{O}(Tm + m^2)$ respectively. For larger $T$, $m$ needs to be larger and hence $m \gg d^2$ in many cases (e.g. $d^2=4$ for the Matern-$\sfrac{3}{2}$ kernel).

\section{Related Work}

GPVAEs have been explored for time series modelling in \citet{fortuin_gp-vae_2020, ashman_sparse_2020}. In our work, we experiment with the state-of-the-art GPVAEs of \citet{fortuin_gp-vae_2020} and SVGPVAE \citep{jazbec_scalable_2021}, and explore a wider variety of tasks. Unlike \citet{fortuin_gp-vae_2020}, MGPVAE is capable of tackling both corrupt and missing frames imputation tasks, as well as spatiotemporal modelling tasks, with great scalability. Compared to sparse GPVAEs \citep{ashman_sparse_2020, jazbec_scalable_2021}, MGPVAE does not require inducing points.

NeuralODEs \citep{chen2018neural, rubanova2019latent} and neuralSDEs \citep{li2020scalable} are also continuous-time models that can tackle the same tasks as MGPVAE. However, they depend heavily on the time discretisation, how well the initial conditions are learned and the expressiveness of the drift and diffusion functions. In our work, we experiment with neuralODEs, which have previously been used for similar missing frames imputation tasks, and find that it is the slowest model whithout achieving good predictive performance.

Many discrete-time and continuous-time models, such as VRNN \citep{chung2015recurrent}, KVAE \citep{fraccaro2017disentangled} and latentODE \citep{rubanova2019latent}, are only designed to model temporal, but not spatiotemporal, data. On the other hand, classical multioutput GPs can only handle lower-dimensional spatiotemporal datasets as there cannot be any dimensionality reduction to a latent space, whereas GPVAE enables the encoder-decoder networks to learn meaningful low-dimensional representations for high-dimensional data. SVGPVAE is able to handle spatiotemporal data, but has the disadvantage of being less efficient than MGPVAE due to the use of inducing points over space and time jointly. 

\citet{chang_fast_2020, hamelijnck_spatio-temporal_2021} considered modelling non-conjugate likelihoods with Gaussian approximations, which would allow for Kalman filtering and smoothing operations. We adopt this strategy to allow flexible decoder choices with non-linear mapping, where a key difference is that the encoder helps us construct a low-dimensional approximation to the likelihood function, which allows us to work with Kalman filtering and smoothing in the lower-dimensional latent space.

\section{Experiments}
We present 3 sets of experiments: rotating MNIST, Mujoco action data and spatiotemporal data modelling. We benchmark MGPVAE against a variety of continuous and discrete time models, such as GPVAE \citep{fortuin_gp-vae_2020}, SVGPVAE \citep{jazbec_scalable_2021}, KVAE \citep{fraccaro2017disentangled}, VRNN \citep{chung2015recurrent}, LatentODE \citep{rubanova2019latent}, CRU (\citet{schirmer2022modeling}; only report RMSE as it was not originally conceived as a generative model) and sparse variational multioutput GP (MOGP). We evaluate the performances using both test negative log-likelihood (NLL) and root mean squared error (RMSE).  We implemented each model across different libraries (JAX, PyTorch and TensorFlow) due to varying suitabilities and tried our best to optimise each implementation for fairness of comparison. All wall-clock time computations are done on NVIDIA RTX-3090 GPUs with 24576MiB RAM. See further experimental details and results in Appendix~\ref{appendix:experiments}.

\subsection{Rotating MNIST}
In this experiment, we produce sequences of MNIST frames in which the digits are rotated with a periodic length of 50, over $T=100$ frames. We tackle 2 imputation tasks for: corrupted frames where the frame pixels are randomly set to 0, and missing frames where frames are randomly dropped out of each sequence. Each task has 4000 /1000 train/test sequences respectively. The underlying dynamics are simple (rotation) which may favour models with stronger inductive biases, such as GPVAE and MGPVAE. To test this, for both models, we use Matern-$\sfrac{3}{2}$ kernels with lengthscales initialised at 40 (fixed for GPVAE according to \citet{fortuin_gp-vae_2020}).

\begin{figure*}[t]
\centering
\vspace{-3mm}
\begin{minipage}{\textwidth}
\centering
\includegraphics[width = \columnwidth]{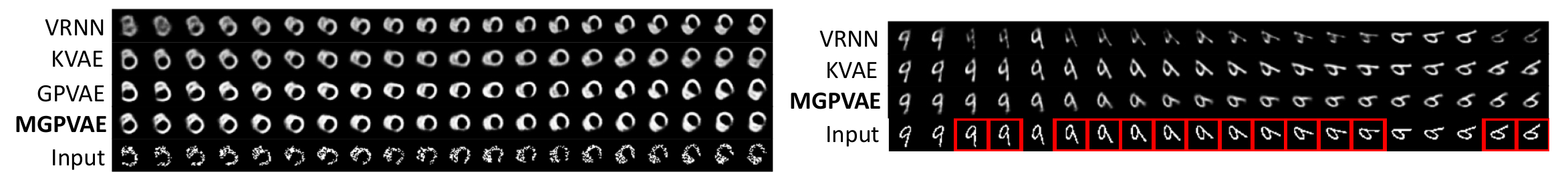}
\vspace{-7mm}
\caption{(Left) Corrupt frames imputation results for an unseen sequence of 5's. (Right) Missing frames imputation results for an unseen sequence of 9's. Missing frames are \textcolor{red}{red frames}.}
\label{fig:hmnist}
\end{minipage}
\begin{minipage}{\textwidth}
\centering
\captionof{table}{Test NLL and RMSE for both the corrupt (Cor) and missing frames (Mis) imputation tasks.}
\vspace{-3mm}
\begin{adjustbox}{width=\linewidth,center}
\begin{tabular}[t]{| c | c |c |c |c |c| c| c|}
\hline
Model  & NLL-Cor $(\downarrow)$ & RMSE-Cor  $(\downarrow)$& Time-Cor (s/epoch $\downarrow$) & NLL-Mis $(\downarrow)$ & RMSE-Mis $(\downarrow)$ & Time-Mis (s/epoch $\downarrow$) \\
\hline
\hline
VRNN & $9898\pm162.0$ & $0.1768\pm0.001563$ & 63.51 & $16240\pm2090$ & $0.1796\pm0.008002$ & 103.6 \\
KVAE & $12500\pm83.13$  & $0.2025\pm0.0006077$ &  139.2 &$10730\pm1232$ & $0.1582\pm0.008688$ & 149.0\\
GPVAE &$9026\pm 48.70$ & $\mathbf{0.1340\pm 0.0004529}$ & \textbf{48.93} & NA & NA & NA
 \\
\textbf{MGPVAE} & $\mathbf{8556\pm69.66}$ & $\mathbf{0.1468\pm0.0006738}$ & \textbf{50.45} & $\mathbf{8925\pm53.40}$ & $\mathbf{0.1508\pm0.0005190}$ & \textbf{59.43} \\
\hline
\end{tabular}
\end{adjustbox}
\label{table:hmnist}
\end{minipage}
\vspace{-4mm}
\end{figure*}

\paragraph{Corrupt frames imputation:} This is a task that highly suits standard RNN-based models such as VRNN and KVAE, since the frames are observed at regular time steps. Even so, we see from Table~\ref{table:hmnist} that both GPVAE and MGPVAE perform significantly better than VRNN and KVAE in both NLL and RMSE, validating our hypothesis of inductive biases helping the learning dynamics. Furthermore, we observe that the RMSE for MGPVAE is worse than GPVAE, although it has a slightly better NLL. However, the left panel of Figure~\ref{fig:hmnist} shows that the images generated do not visually differ significantly, which implies that the model performance is comparable.

\paragraph{Missing frames imputation:} \citet{rubanova2019latent} showed that RNN-based models fail in imputing irregularly sampled time-series, as they struggle to correctly update the hidden states at time steps of unobserved frames. This is confirmed by our results in Table~\ref{table:hmnist}: MGPVAE outperforms both VRNN and KVAE in terms of NLL and RMSE.\footnote{We omit GPVAE here as the implementation by \citet{fortuin_gp-vae_2020} cannot efficiently handle missing frames in batches. See Appendix~\ref{appendix:experiments}.} 
In the right panel of Figure~\ref{fig:hmnist}, we illustrate the posterior mean imputations, and again VRNN fails. These results are expected since VRNN implements filtering (only includes past observations), while MGPVAE and KVAE include a smoothing step (includes both past and future observations).

\subsection{Mujoco Action Data}
The Mujoco dataset is a physical simulation dataset generated using the Deepmind Control Suite \citep{tunyasuvunakool2020}. We obtained the Hopper generation code from \citet{rubanova2019latent}, which outputs 14 dimensions sequences, and for all models we use 15-dimensional latent dimensions (according to \citet{rubanova2019latent}). We modify the task so that we only train on the observed time steps, whereas in \citet{rubanova2019latent} the models have access to data at all the time steps. This makes the task harder as the model has less information to work with during training. We have 2 settings (1) 1280/400 train-test split with length $T=100$ and (2) 320/100 for length $T=1000$. Compared to rotating MNIST, the underlying nonlinear dynamics are more complex, and each dimension can behave differently. For both SVGPVAE and MGPVAE, we use Matern-$\sfrac{3}{2}$ kernels.

\paragraph{Results:} We see from Table~\ref{table:mujoco} that the performance of VRNN, KVAE, latentODE and CRU are significantly worse than GP-based models, and overall both SVGPVAE and MGPVAE achieve the best NLL and RMSE. This is because missing data imputation is a difficult task for the discrete-time RNN-based models. On the other hand, latentODE significantly underperforms, possibly due to the ELBO being computed only over the observed time steps, making it more difficult to fit the model than the original task in \citet{rubanova2019latent}. Figure~\ref{fig:mujoco} confirms the results; indeed the non-GP models struggle to simultaneously fit the data and estimate uncertainty well. Additional results can be found in Appendix~\ref{appendix:action}. 

In terms of time complexity, VRNN, KVAE, latentODE and CRU are comparable to each other as shown in Figure~\ref{fig:timing}.  The time and memory complexities for SVGPVAE is dependent on the number of inducing points (see section~\ref{subsection:computational}) and there is a trade-off between model expressiveness and inducing points. For longer sequences, such as for $T=1000$, we would have to choose more inducing points to gain comparable performance to MGPVAE. We see that with only 20 inducing points, although the wall-clock time is faster than MGPVAE, SVGPVAE-20 underperforms MGPVAE; with 40 inducing points, although the performance and time complexities are comparable to MGPVAE, it also has maxed-out the memory on our GPU. In comparison, MGPVAE does not require any inducing points and is the most time-efficient model.

\begin{figure*}[t]
\centering
\begin{minipage}{\textwidth}
\includegraphics[width = \textwidth]{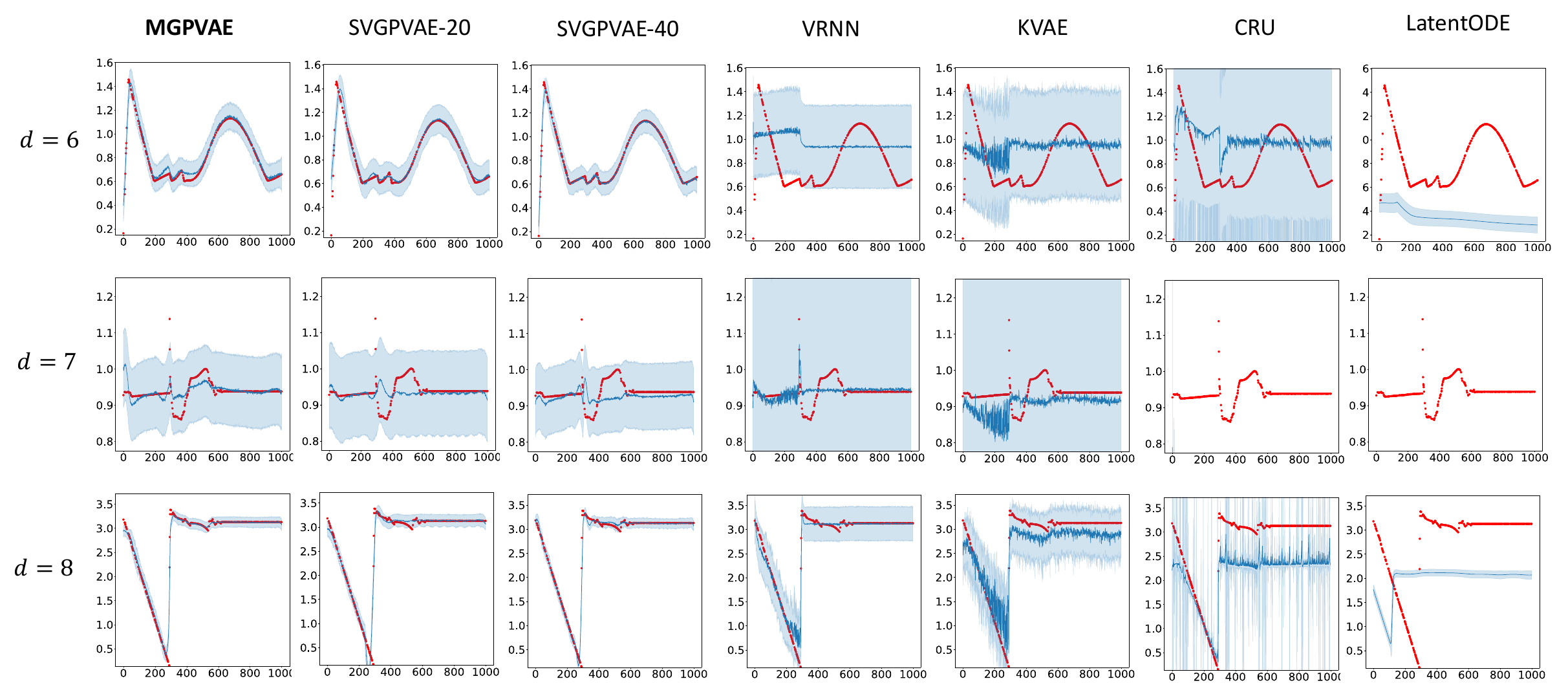}
\vspace{-7mm}
\caption{95$\%$ posterior credible intervals for unseen mujoco sequences in its 6,7 and 8th dimensions with $T=1000$. The red dots show observed data. Note that some predictions are not showing as they fall outside the limits.}
\label{fig:mujoco}
\end{minipage}
\end{figure*}

\begin{figure}[t]
\centering
\begin{minipage}{\columnwidth}
\includegraphics[width = \columnwidth]{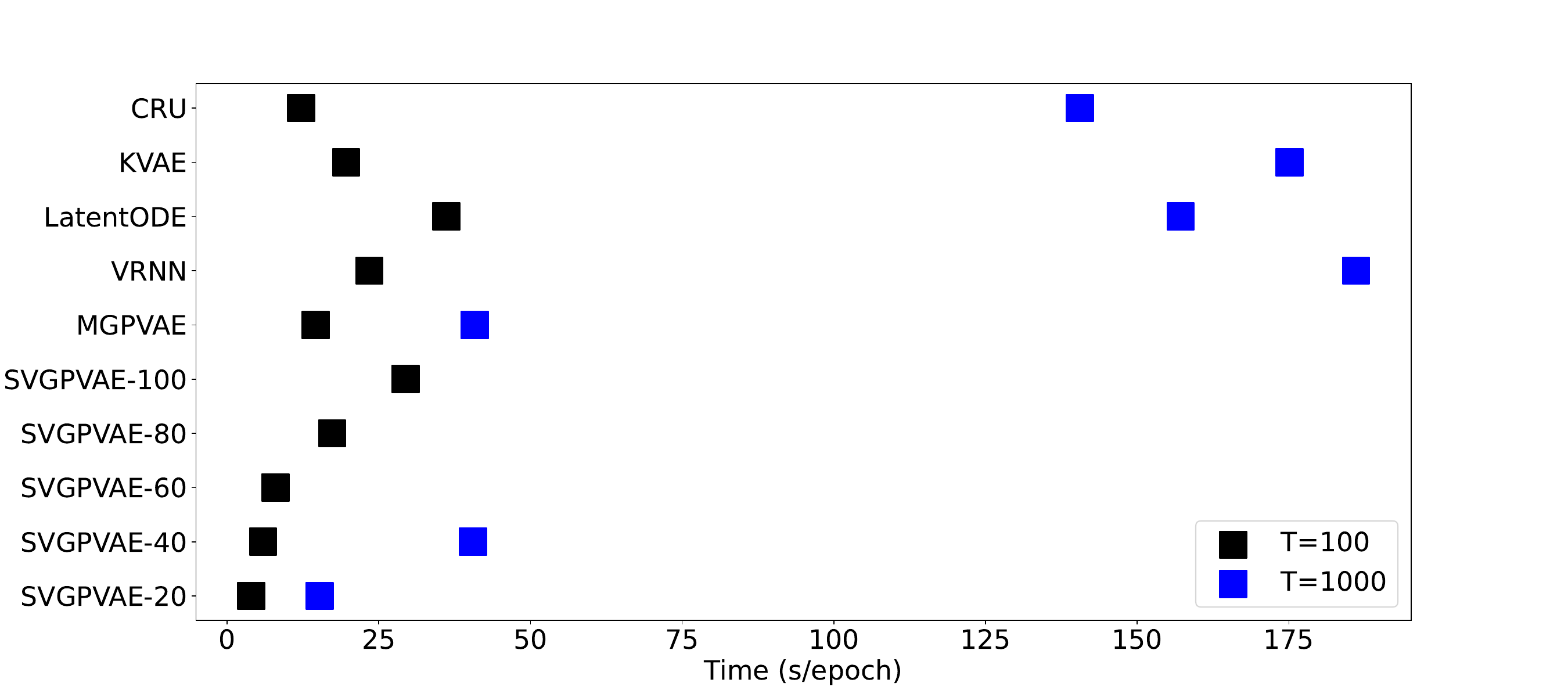}
\vspace{-7mm}
\caption{Wallclock times for each model in the Mujoco experiment.}
\label{fig:timing}
\end{minipage}
\end{figure}

\begin{figure}[t]
\vspace{-3mm}
\begin{minipage}{0.48\textwidth}
\centering
\captionof{table}{Imputation results for the Mujoco tasks.}
\vspace{-4mm}
\begin{adjustbox}{width=\linewidth,center}
\begin{tabular}[t]{| c | c | c | c | c | }
\hline
Model & NLL ($T=100$) $(\downarrow)$ & RMSE ($T=100$) $(\downarrow)$ & NLL ($T=1000$) $(\downarrow)$ & RMSE ($T=1000$) $(\downarrow)$ \\
\hline
\hline
CRU & - & $0.1343\pm0.009169$ & - &  $0.1353\pm0.008574 $\\
\hline 
VRNN & $-385.2\pm 25.59$ & $0.1774\pm0.002499$ & $-2877\pm 450.8$ &  $0.1844\pm 0.004053$\\
KVAE & $-8.353\pm 25.6$ & $0.1828\pm 0.004587$ &  $-262.4\pm 1141$& $0.1761\pm 0.01751$\\
LatentODE & $124\pm 11.99$ & $0.06599\pm 0.0007146$ & $240.6\pm 38.62$ & $0.07749\pm 0.001119$\\
SVGPVAE-20 & $\mathbf{-2438\pm 111}$ & $0.02841\pm 0.003288$ & $-18020\pm 282.6$ & $0.0538\pm 0.0007901$\\
SVGPVAE-40 & $\mathbf{-2468\pm 106.4}$ & $\mathbf{0.02566\pm 0.002945}$ & $-21290\pm 136.8$ & $0.04237\pm 0.000295$\\
SVGPVAE-60 & $-2290\pm 53.69$ & $0.03014\pm 0.001579$ & - & -\\
SVGPVAE-80 & $-2312\pm 67.76$ & $0.0287\pm 0.001866$ &- & -\\
\textbf{MGPVAE} & $-2292\pm 18.02$ & $0.03068\pm0.0006991$ &$\mathbf{-21610\pm 233.7}$ & $\mathbf{0.04156\pm 0.0007136}$\\
\hline
\end{tabular}
\end{adjustbox}
\label{table:mujoco}
\end{minipage}
\vspace{-5mm}
\end{figure}

\subsection{Spatiotemporal Climate Data}
We obtained climate data, including temperature and precipitation, from ERA5 using Google Earth Engine API \citep{gorelick2017google}. The task is to condition on observed data $\{\bY(r,t)\}_{r,t}$ (temperature and air pressure), and predict on unknown spatial locations $\{\bY(r_*,t)\}_{r_*,t}$ for all time steps. Here we focus on GP-based models since they, unlike RNN-based models, can flexibly handle spatiotemporal data by combining spatial and temporal kernels to efficiently model correlated structures. In particular, for GPVAE models we use a separable kernel $k(r,t,r',t)=k_r(r,r')k_t(t,t')$ for each latent channel and the spatial kernel $k_r$ is shared across channels. We also consider Gaussian process prediction which is also known as kriging in spatial statistics \citep{cressie2015statistics}. Sparse approximation is needed for scalability for the MOGP baseline due to computational feasibility. We emphasise that the dimensionality of this problem, which is 8, is high relative to traditional spatiotemporal modelling problems \citep{cressie2015statistics}.

\begin{figure*}[t]
\centering
\vspace{-3mm}
\begin{minipage}{\textwidth}
\centering
\includegraphics[width = \columnwidth]{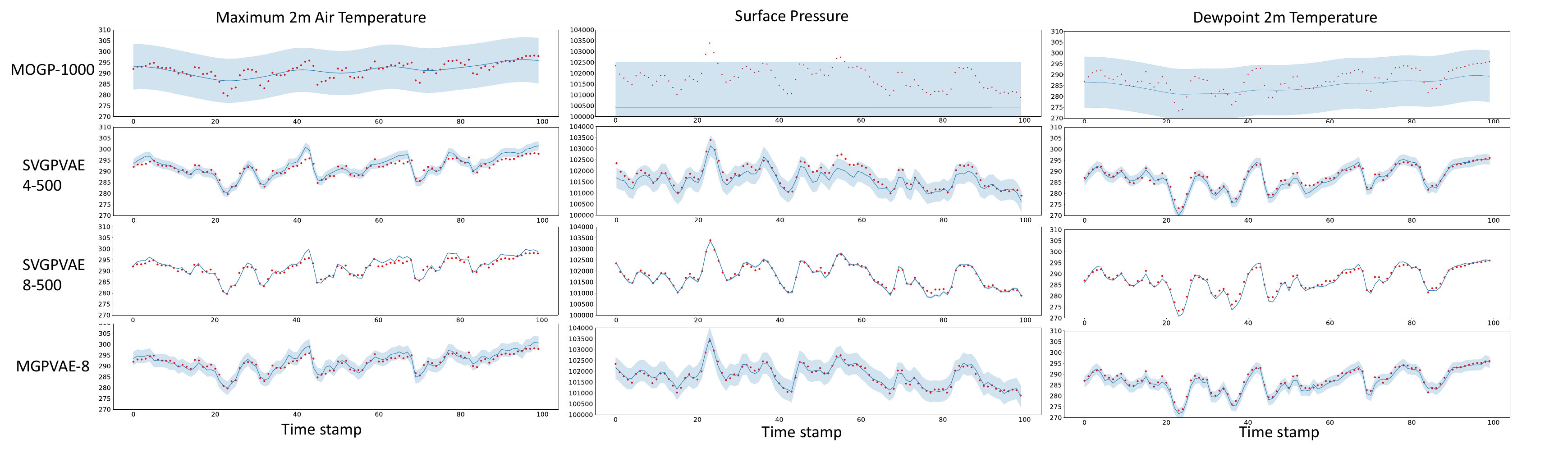}
\vspace{-8mm}
\caption{95$\%$ posterior credible intervals for climate variables at an unseen spatial location. }
\label{fig:era5_time}
\vspace{-4mm}
\end{minipage}
\end{figure*}

\begin{figure*}[tbp]
\centering
\includegraphics[width = \textwidth]{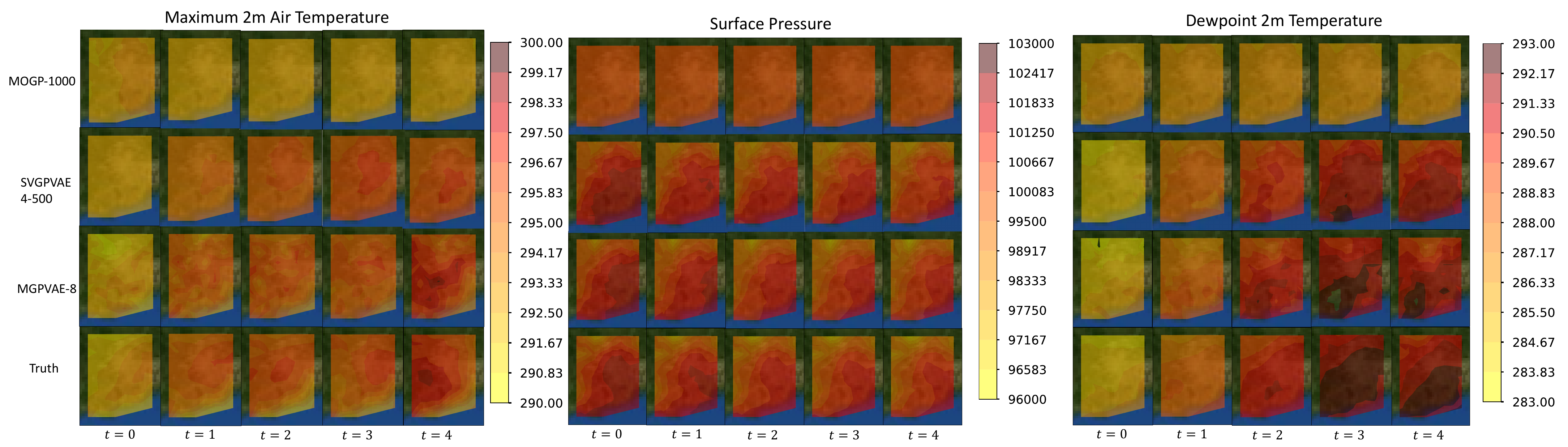}
\vspace{-8mm}
\caption{Posterior means over space for climate variables over 5 time steps.}
\vspace{-3mm}
\label{fig:era5_spatial}
\end{figure*}

\paragraph{Results:} We report the corresponding quantitative results in Table~\ref{table:era5} and visualise the prediction results at an unseen spatial location in Figure~\ref{fig:era5_time}. 
We observe that MOGP underfits and overestimates the uncertainty, which is expected as traditional GP regression models struggle to fit a complicated multi-dimensional time series with non-warped GPs (without decoder network). In comparison, the encoder-decoder networks allow GPVAEs to learn simpler dynamics, resulting in better performance. The conclusions are similar when we fix the time steps and plot the corresponding posterior mean over space in Figure~\ref{fig:era5_spatial}, where GPVAE models better predict the spatial patterns.

SVGPVAE underperforms MGPVAE even when pushing the number of inducing points to the memory limit of our hardware (500). Interestingly, SVGPVAE underestimates the uncertainty more as we increase the number of latent channels to 8, as can also be seen from the large negative log-likelihood per dim (NLPD) values. It is unclear why this occurs, though this could be related to an imbalance in the regularisation effect with the KL divergence. We note that MGPVAE does not suffer from the same issue. 

SVGPVAE results can be improved if having well-placed and sufficiently-many inducing points, but this also means SVGPVAE is memory inefficient when compared with MGPVAE. For the spatiotemporal modelling task, a significantly larger number of inducing points is required and thus further illustrates the drawbacks of SVGPVAE for spatiotemporal modelling. Similarly for computational time, MGPVAE is faster than the other models with more inducing points, but slower than the ones with less of inducing points (which also perform worse).

\begin{figure}[tbp]
\centering
\begin{minipage}{0.5\textwidth}
\centering
\captionof{table}{ERA5 prediction results.}
\vspace{-3mm}
\begin{adjustbox}{width=0.9\linewidth,center}
\begin{tabular}[t]{| c | c | c| c|}
\hline
Model  & NLPD $(\downarrow)$  & RMSE $(\downarrow)$ & Time $(\text{s/epoch } \downarrow)$ \\
\hline
\hline
MOGP-100 &  $10.38\pm0.1783$  &$1119\pm21.29$ & 0.1220 \\
MOGP-1000 &  $10.38\pm0.1784$  & $1119\pm21.13$ & 0.6001\\
SVGPVAE-4-100  & $16.40\pm1.531$  & $434.04\pm11.40$ & 0.05987 \\
SVGPVAE-4-500 & $8.222\pm0.9483$  & $388.8\pm9.347$ & 0.7046 \\
SVGPVAE-8-100 & $1376\pm473.0$  & $392.2\pm 18.08$& 0.08675 \\
SVGPVAE-8-500 & $2613\pm 540.6$  & $\mathbf{313.9\pm21.42}$ & 1.344 \\
MGPVAE-4 & $2.454\pm0.1149$  & $402.1\pm 15.24$ & 0.45876 \\
\textbf{MGPVAE-8} & $\mathbf{-0.3070\pm0.1021}$ & $\mathbf{352.7\pm24.86}$ & 0.5128 \\
\hline
\end{tabular}
\end{adjustbox}
\label{table:era5}
\end{minipage}
\vspace{-5mm}
\end{figure}

\section{Conclusion and Discussion}
We propose MGPVAE, a GPVAE model using Markovian Gaussian processes that uses Kalman filtering and smoothing with linear time complexity. This is achieved by approximating the non-Gaussian likelihoods using Gaussian sites. Compared to \citet{fortuin_gp-vae_2020}, which also achieves linear time complexity by using an approximate covariance structure, our model leaves the original Gaussian process covariance structure intact, and additionally work with both irregularly-sampled time series and spatiotemporal data. Experiments on video, Mujoco action and climate modelling tasks show that our method is both competitive and scalable, compared to modern discrete and continuous-time models. Future work can explore the use of parallel filtering \citep{sarkka2020temporal}, other nonlinear filtering approaches \citep{kamthe2022iterative} and forecasting applications.

\paragraph{Limitations:} MGPVAE requires the use of kernels that admit a Markovian decomposition, and therefore kernels such that the squared exponential kernel will not be permissible \citep{sarkka2019applied} without additional approximations. However, we argue that most commonly used kernels are indeed Markovian, which should be sufficient for most applications. In addition, we need to use Gaussian site approximations for the likelihood, in order to allow for Kalman filtering and smoothing. This may lower the approximation accuracy, though in our experiments we see that MGPVAE still performs strongly.

\section{Acknowledgements}
We would like to especially thank Wenlin Chen for his valuable help with code and experiments during the revision period, especially implementing a more optimised version of SVGPVAE with functorch \citep{functorch2021} compared to the original TensorFlow implementation of \citet{jazbec_scalable_2021}. HZ was supported by the EPSRC Centre for Doctoral Training in Modern Statistics and Statistical Machine Learning (EP/S023151/1) and the Department of Mathematics of Imperial College London. HZ was supported by Cervest Limited. We would also like to thank Andy Thomas for his endless support with using the NVIDIA4, Forrest and NVIDIA6 GPU Compute Servers.

{ 
\bibliography{ref}
\bibliographystyle{icml2023}
}

\appendix
\onecolumn
\section{Markovian Gaussian Process Background}
\label{appendix:mgpvae}
In this section, we provide a rigorous treatment of the derivation of the state space Markovian Gaussian process equations as previously presented in \citet{sarkka2019applied} with stochastic differential equation technicalities from \citet{oksendal_stochastic_2003, evans_introduction_2006}.

\subsection{Stochastic Differential Equations}
\begin{definition}[Univariate Brownian Motion; \citep{oksendal_stochastic_2003,evans_introduction_2006}]
\label{defn:uni_brownian}
  Let $(\Omega, \mathcal{F},\mathbb{P})$ be a probability space. $B:[0,T]\times \Omega\rightarrow\mathbb{R}$ is a univariate Brownian motion if:
  \begin{enumerate}
    \item $B_0=0$, almost surely. 
    \item  $t\mapsto B(t,\cdot)$ is continuous almost surely. 
    \item  For $t_4>t_3\geq t_2>t_1$, $B_{t_4}-B_{t_3}\indep B_{t_2}-B_{t_1}$. 
    \item For any $\delta>0$, $B_{t+\delta} - B_t\sim\mathcal{N}(0, \delta q)$, where $q$ the correlation factor.
  \end{enumerate}
\end{definition}
In a similar manner, we may extend univariate Brownian motions into correlated multi-dimensional forms.
\begin{definition}[Multi-dimensional Brownian Motion]
  $B_t$ is an $e$-dimensional (correlated) Brownian motion:  each $B^k$, for $k=1,\ldots, e$, is a univariate Brownian motion. We assume that each $B^k$ is correlated, resulting in $B_{t+\delta}-B_t\sim\mathcal{N}(0, \delta \mathbf{Q}_c)$, where $\mathbf{Q}_c\in\mathbb{R}^{e\times e}$ is the spectral density matrix.
\end{definition}
Define $\mathcal{F}_t$ to be the $\sigma$-algebra generated by $B_s(\cdot)$ for $s\leq t$. Intuitively, this is the `history of information' up to time $t$ created by $B_s$ for $s\leq t$. Therefore for a stochastic process driven by $B_s$ up to time $t$, we would like it to be measurable at all times $t$, which motivates the next definition of measurability for stochastic processes:
\begin{definition}
  Let \{$\mathcal{F}_{t}\}_{t\geq 0}$ be an increasing family of $\sigma$-algebras generated by $B_t$ i.e. for $s<t$, $\mathcal{F}_s\subset \mathcal{F}_t$. Then $v(t,\omega):[0,\infty)\times\Omega\rightarrow\mathbb{R}^e$ is $\mathcal{F}_t$-adapted if $v(t,\omega)$ is measurable in $\mathcal{F}_t$ for all $t\geq 0$.
\end{definition}

With the correlated formulation, we can rederive It\^{o} isometry using the same proof as in the uncorrelated case \citep{oksendal_stochastic_2003, evans_introduction_2006}. We use the definition of the multi-dimensional It\^{o} integral in Definition 3.3.1~\citet{oksendal_stochastic_2003}. Let $\mathbb{E}$ be the expectation with respect to $\mathbb{P}$.
\begin{definition}[Multi-dimensional It\^{o} Integral \citep{oksendal_stochastic_2003}]
  Let $\mathcal{V}^{d\times e}(S,T)$, for $d\in\mathbb{N}$ and $S<T$, be a set of matrix-valued stochastic processes $v(t,\omega)\in\mathbb{R}^{d\times e}$ where each entry $v_{ij}(t, \omega)$ satisfies:
  \begin{itemize}
      \item $(t,\omega)\rightarrow v_{ij}(t,\omega)$ is $\mathcal{B}(\mathbb{R})\times\mathcal{F}$-measurable, where $\mathcal{B}(\mathbb{R})$ is the Borel $\sigma$-algebra on $[0,\infty)$.
      \item $v_{ij}(\tau,\omega)\in L^2([S,T])$ in expectation i.e. $\mathbb{E}[\int_S^T v_{ij}^2(\tau,\cdot) \text{d}\tau]$.
      \item Let $W_t$ be a univariate Brownian motion. There exists an increasing family of $\sigma$-algebras \{$\mathcal{H}_t\}_{t\geq 0}$ such that (1) $W_t$ is a martingale with respect to $\mathcal{H}_t$ and (2) $v_{ij}(t,\omega)$ is $\mathcal{H}_t$-adapted.
  \end{itemize}
  Then, we can define the multi-dimensional It\^{o} integral as follows: For all $v\in\mathcal{V}^{d\times e}(S,T)$,
  \begin{align*}
    \int_S^T v(\tau,\omega) \text{d}B_\tau = \int_S^T \begin{pmatrix}
    v_{11}(\tau,\omega) &\cdots & v_{1e}(\tau,\omega) \\
    \vdots & \ddots & \vdots \\
    v_{d1}(\tau,\omega) &\cdots & v_{de}(\tau,\omega) \\
    \end{pmatrix}
\begin{pmatrix}
    \text{d}B^1_\tau \\
    \vdots \\
    \text{d}B^e_\tau\\
    \end{pmatrix},
  \end{align*}
  where $\bigg[\int_S^T v(\tau,\omega) \text{d}B_\tau\bigg]_{i}=\sum_{j=1}^e \int_S^T v_{ij}(\tau,\omega) dB^j_\tau$.
\end{definition}

\begin{proposition}[Multi-dimensional It\^{o} Isometry]
\label{prop:Ito_isometry}
  For $\mathbf{F}, \mathbf{G}\in \mathcal{V}^{d\times e}$ (as defined in \citet{oksendal_stochastic_2003}) and $S<T$, then 
  \begin{talign*}
    \mathbb{E}[\int_{S}^{T} \mathbf{F}(\tau, \cdot) \text{d}B_\tau][\int_S^{T} \mathbf{G}(\tau, \cdot) \text{d}B_\tau]^\intercal = \mathbb{E}\int_S^T \mathbf{F}(\tau,\cdot) \mathbf{Q}_c \mathbf{G}(\tau,\cdot)^\intercal \text{d}\tau.
  \end{talign*}
  Furthermore, if $\bF$ and $\mathbf{G}$ are deterministic, then for $t_1,t_2\geq t_0$,
    \begin{talign*}
    \mathbb{E}[\int_{t_0}^{t_1} \mathbf{F}(\tau) \text{d}B_\tau][\int_{t_0}^{t_2} \mathbf{G}(\tau) \text{d}B_\tau]^\intercal = \mathbb{E}\int_{t_0}^{\min(t_1,t_2)} \mathbf{F}(\tau) \mathbf{Q}_c \mathbf{G}(\tau)^\intercal \text{d}\tau.
  \end{talign*}
\end{proposition}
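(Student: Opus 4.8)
The plan is to establish the identity entrywise and reduce it to a scalar ``cross-isometry'' for It\^o integrals against the individual correlated components $B^1,\dots,B^e$, and then reassemble. Writing out the $(i,k)$ entry of the left-hand side,
\[
\Big[\mathbb{E}\big[\textstyle\int_S^T \mathbf{F}\,\text{d}B_\tau\big]\big[\int_S^T \mathbf{G}\,\text{d}B_\tau\big]^\intercal\Big]_{ik} = \sum_{j=1}^e\sum_{l=1}^e \mathbb{E}\Big[\Big(\int_S^T F_{ij}\,\text{d}B^j_\tau\Big)\Big(\int_S^T G_{kl}\,\text{d}B^l_\tau\Big)\Big],
\]
so it suffices to prove that for scalar integrands $f,g$ in the one-dimensional analogue of $\mathcal{V}(S,T)$, adapted to the relevant filtration,
\[
\mathbb{E}\Big[\Big(\int_S^T f\,\text{d}B^j_\tau\Big)\Big(\int_S^T g\,\text{d}B^l_\tau\Big)\Big] = (\mathbf{Q}_c)_{jl}\,\mathbb{E}\int_S^T f(\tau,\cdot)\,g(\tau,\cdot)\,\text{d}\tau .
\]
Summing over $j,l$ and pulling the finite sums inside expectation and integral then gives $\mathbb{E}\int_S^T \sum_{j,l} F_{ij}(\mathbf{Q}_c)_{jl}G_{kl}\,\text{d}\tau = \mathbb{E}\int_S^T [\mathbf{F}\mathbf{Q}_c\mathbf{G}^\intercal]_{ik}\,\text{d}\tau$, which is exactly the $(i,k)$ entry of the right-hand side. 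The second statement then follows from the first: assuming without loss of generality $t_1\le t_2$, apply the first part on $[t_0,t_2]$ with $\mathbf{F}$ replaced by $\mathbf{F}\,\mathbbm{1}_{[t_0,t_1]}$; since $\mathbf{F}\,\mathbbm{1}_{[t_0,t_1]}(\tau)\mathbf{Q}_c\mathbf{G}(\tau)^\intercal$ vanishes for $\tau>t_1$, the right-hand side collapses to $\mathbb{E}\int_{t_0}^{\min(t_1,t_2)}\mathbf{F}\mathbf{Q}_c\mathbf{G}^\intercal\,\text{d}\tau$. (Equivalently, split $\int_{t_0}^{t_2}\mathbf{G}\,\text{d}B = \int_{t_0}^{t_1}\mathbf{G}\,\text{d}B + \int_{t_1}^{t_2}\mathbf{G}\,\text{d}B$; the latter is mean-zero and independent of $\mathcal{F}_{t_1}$, hence uncorrelated with $\int_{t_0}^{t_1}\mathbf{F}\,\text{d}B$, so only the $[t_0,t_1]$ part survives.)

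For the scalar cross-isometry I would follow the standard two-stage construction of the It\^o integral, as in \citet{oksendal_stochastic_2003}. First prove it for elementary processes $f=\sum_a \xi_a\mathbbm{1}_{[t_a,t_{a+1})}$, $g=\sum_b \eta_b\mathbbm{1}_{[t_b,t_{b+1})}$ taken on a common refinement of partitions: the product expands as $\sum_{a,b}\mathbb{E}[\xi_a\eta_b(\Delta_aB^j)(\Delta_bB^l)]$, where for $a\ne b$ the increment over the later subinterval is independent of everything preceding it and has mean zero, killing that term, while for $a=b$ one uses $\mathbb{E}[(\Delta_aB^j)(\Delta_aB^l)\mid \mathcal{F}_{t_a}] = (\mathbf{Q}_c)_{jl}(t_{a+1}-t_a)$ together with adaptedness of $\xi_a,\eta_a$. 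This yields $(\mathbf{Q}_c)_{jl}\sum_a\mathbb{E}[\xi_a\eta_a](t_{a+1}-t_a) = (\mathbf{Q}_c)_{jl}\,\mathbb{E}\int_S^T fg\,\text{d}\tau$. Then extend to general $f,g$ by taking elementary approximations $f_n\to f$, $g_n\to g$ in $L^2([S,T]\times\Omega)$ — the density argument is unchanged from the uncorrelated case — and passing to the limit in the bilinear identity: the isometry/continuity $\int f_n\,\text{d}B^j\to \int f\,\text{d}B^j$ in $L^2(\Omega)$ follows from the special case $j=l$, $f=g$, namely $\mathbb{E}(\int h\,\text{d}B^k)^2 = (\mathbf{Q}_c)_{kk}\,\mathbb{E}\int h^2\,\text{d}\tau$, and the cross term is controlled by Cauchy--Schwarz.

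The main obstacle I anticipate is bookkeeping rather than anything conceptual: keeping track of the correct filtration for each component $B^k$ so that the $a\ne b$ increment terms genuinely vanish and the conditional-covariance identity applies, and checking that $L^2$-density of elementary processes in $\mathcal{V}^{d\times e}(S,T)$ transfers verbatim once $\mathbf{Q}_c$ is positive semidefinite. Both points are handled exactly as in the scalar, uncorrelated treatment of \citet{oksendal_stochastic_2003, evans_introduction_2006}, so I would invoke those approximation lemmas by citation rather than reprove them, and only spell out the one new ingredient, the cross-covariance $\mathbb{E}[(\Delta B^j)(\Delta B^l)] = (\mathbf{Q}_c)_{jl}\,\Delta t$ of correlated increments, which is immediate from the definition of the multi-dimensional Brownian motion.
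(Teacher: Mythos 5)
Your proposal is correct and follows essentially the same route as the paper's proof: an entrywise reduction to the scalar cross-isometry $\mathbb{E}\big[\int_S^T f\,\text{d}B^j_\tau\int_S^T g\,\text{d}B^l_\tau\big]=[\mathbf{Q}_c]_{jl}\,\mathbb{E}\int_S^T fg\,\text{d}\tau$, reassembly into matrix form via the canonical basis, and, for the deterministic case, splitting $\int_{t_0}^{t_2}$ at $t_1$ and discarding the later increment by independence and mean zero (your parenthetical alternative is exactly the paper's argument). The only difference is that you actually prove the scalar cross-isometry from elementary processes, whereas the paper simply labels that step ``1D It\^o Isometry''; your extra care is warranted, since the standard scalar isometry covers only $j=l$ with $f=g$, and the correlated cross term does require the polarization/elementary-process argument together with $\mathbb{E}[(\Delta B^j)(\Delta B^l)]=[\mathbf{Q}_c]_{jl}\,\Delta t$ that you spell out.
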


\begin{proof}
Let $\{e_i\}_{i=1}^e$ be the canonical basis in $\mathbb{R}^e$. We first show that $ik$-th component of $\bigg[\mathbb{E}[\int_{S}^{T} \mathbf{F}(\tau, \cdot) \text{d}B_\tau][\int_S^{T} \mathbf{G}(\tau, \cdot) \text{d}B_\tau]^\intercal\bigg]\in\mathbb{R}^{d\times d}$, for $S<T$, is equal to 
\begin{talign*}
  \int_S^T \bF_i^\intercal(\tau,\cdot) \bQ_c [\mathbf{G}(\tau,\cdot)]_{k}^{\intercal} \text{d}\tau&=\mathbb{E}\int_S^T e_i^\intercal\bF^\intercal(\tau,\cdot) \bQ_c [\mathbf{G}(\tau,\cdot)]^\intercal_k \text{d}\tau\\
  &=e_i^\intercal\bigg[\mathbb{E}\int_S^T \bF^\intercal(\tau,\cdot) \bQ_c \mathbf{G}(\tau,\cdot)^\intercal \text{d}\tau\bigg] e_k\\
  &= \bigg[\mathbb{E}\int_S^T \bF^\intercal(\tau,\cdot) \bQ_c \mathbf{G}(\tau,\cdot)^\intercal \text{d}\tau\bigg]_{ik},
\end{talign*}
which would complete the proof. Indeed, the $ik$-th component is equal to
\begin{talign*}
  \mathbb{E}\bigg[\sum_{j=1}^e \int_S^T \bF_{ij}(\tau,\cdot) \text{d}B_\tau^j\bigg]\bigg[\sum_{l=1}^e \int_S^T \mathbf{G}_{kl}(\tau,\cdot) \text{d}B_\tau^l\bigg] &= \bigg[\sum_{j,l=1}^e \mathbb{E}\int_S^T \bF_{ij}(\tau,\cdot) \text{d}B_\tau^j \int_S^T \mathbf{G}_{kl}(\tau,\cdot) \text{d}B_\tau^l\bigg] \\
  &\stackrel{\mathclap{\text{\tiny 1D It\^{o} Isometry}}}{=} \qquad\bigg[\sum_{j,l=1}^e \mathbb{E}\int_S^T \bF_{ij}(\tau,\cdot)[\bQ_c]_{jl}\mathbf{G}_{kl}(\tau,\cdot) \text{d}\tau\bigg] \\
  &=\sum_{j,l=1}^e \mathbb{E}\int_S^T \bF_{ij}(\tau,\cdot)[\bQ_c]_{jl}[\mathbf{G}^\intercal(\tau,\cdot)]_{lk} \text{d}\tau\\
    &=\mathbb{E}\int_S^T e_i^\intercal \bF(\tau,\cdot)\bQ_c[\mathbf{G}^\intercal(\tau,\cdot)]e_k \text{d}\tau\\
    &=\bigg[\mathbb{E}\int_S^T  \bF(\tau,\cdot)\bQ_c[\mathbf{G}^\intercal(\tau,\cdot)] \text{d}\tau\bigg]_{ik},
\end{talign*}
as required. Next, if $\mathbf{F}$ and $\mathbf{G}$ are deterministic and suppose that $t_1<t_2$ without loss of generality, then
\begin{talign*}
    \mathbb{E}[\int_{t_0}^{t_1} \mathbf{F}(\tau) \text{d}B_\tau][\int_{t_0}^{t_2} \mathbf{G}(\tau) \text{d}B_\tau]^\intercal &= \mathbb{E}[\int_{t_0}^{t_1} \mathbf{F}(\tau) \text{d}B_\tau][\int_{t_0}^{t_1} \mathbf{G}(\tau) \text{d}B_\tau + \cancel{\int_{t_1}^{t_2} \mathbf{G}(\tau) \text{d}B_\tau}]^\intercal \\
    &=\mathbb{E}[\int_{t_0}^{\min(t_1,t_2)} \mathbf{F}(\tau) \text{d}B_\tau][\int_{t_0}^{\min(t_1,t_2)} \mathbf{G}(\tau) \text{d}B_\tau]^\intercal \\
    &=\mathbb{E}\int_{t_0}^{\min(t_1,t_2)} \mathbf{F}(\tau)\bQ_c\mathbf{G}^\intercal(\tau) \text{d}B_\tau,
\end{talign*}
where for the first line we used property (iii) of Definition~\ref{defn:uni_brownian} of Brownian motions in multi-dimensions.
\end{proof}

\subsection{Markovian Gaussian Processes}
A Markovian Gaussian process $f\sim\mathcal{GP}(0,k)$ can be written with an SDE of latent dimension $d$ 
\begin{talign}
  %\label{eqn:temporal_sde}
  \text{d}\mathbf{s}(t) = \mathbf{F}\mathbf{s}(t) \text{d}t + \mathbf{L}\text{d}B_t,\quad f(x)=\mathbf{H}\mathbf{s}(t),
\end{talign}
where $\mathbf{F}\in\mathbb{R}^{d\times d}, \mathbf{L}\in\mathbb{R}^{d\times e}, \mathbf{H}\in\mathbb{R}^{1\times d}$ are the feedback, noise effect and emission matrices, and $B_t$ is an $e$-dimensional (correlated) Brownian motion with spectral density matrix $\mathbf{Q}_c$. Suppose that $\bs(t_0)\sim\mathcal{N}(\mathbf{m}_0, \mathbf{P}_0)$ with the stationary state mean and covariance. Note that we assume that $s(t_0)$ independent of $\mathcal{F}_{t_0}^+$, the $\sigma$-algebra generated by $B_t-B_s$ for $t\geq s\geq t_0$, in order to invoke existence and uniquess of SDE solutions (Existence and Uniqueness Theorem, page 90, \citet{evans_introduction_2006}). Thus the linear SDE equation~\ref{eqn:temporal_sde} admits a unique closed form solution:
\begin{talign*}
  \mathbf{s}(t) = e^{(t-t_0)\mathbf{F}}\mathbf{s}(t_0) + \int_{t_0}^t e^{(t-\tau)\mathbf{F}}\mathbf{L}\text{d}B_\tau. 
\end{talign*}
We have
\begin{talign*}
  m(t)=\mathbb{E}[\mathbf{s}(t)] = e^{(t-t_0)\mathbf{F}}\mathbf{m}_0,
\end{talign*}
since for any $f\in \mathcal{V}^{d\times e}$, $\mathbb{E}[\int_S^T f(\tau) \text{d}B_\tau]=0$ \citep{oksendal_stochastic_2003,evans_introduction_2006}. To calculate the covariance, we have
\begin{talign*}
  \kappa(t,t') &= \mathbb{E}[\mathbf{s}(t)-m(t))(\mathbf{s}(t')-m(t'))^\intercal]\\
  &=e^{(t-t_0)\mathbf{F}}\mathbb{E}[(\mathbf{s}(t_0)-\mathbf{m}_0)(\mathbf{s}(t_0) - \mathbf{m}_0)^\intercal][e^{(t'-t_0)\mathbf{F}}]^\intercal \\
  &+ \mathbb{E}[\int_{t_0}^t e^{(t-\tau)\mathbf{F}}\text{d}B_\tau][\int_{t_0}^{t'} e^{(t'-\tau)\mathbf{F}}\text{d}B_{\tau'}]^\intercal \\
  &+ \cancel{\mathbb{E}\bigg[e^{(t-t_0)\mathbf{F}}\mathbf{L}(\mathbf{s}(t_0)-\mathbf{m}_0)(\int_{t_0}^{t'} e^{(t'-\tau)\mathbf{F}}\mathbf{L}\text{d}B_{\tau})^\intercal\bigg]} \quad \mbox{(Independence)}\\
  &+\cancel{\mathbb{E}\bigg[(\int_{t_0}^{t} e^{(t-\tau)\mathbf{F}}\mathbf{L}\text{d}B_{\tau})(\mathbf{s}(t_0)-\mathbf{m}_0)^\intercal \mathbf{L}^\intercal [e^{(t-t_0)\mathbf{F}}]^\intercal\bigg]} \quad \mbox{(Independence)}\\
  &= e^{(t-t_0)\mathbf{F}}\mathbf{P}_0 [e^{(t'-t_0)\mathbf{F}}]^\intercal
  +\int_{t_0}^{\min(t,t')}e^{(t-\tau)\mathbf{F}}\mathbf{L}\mathbf{Q}_c\mathbf{L}^\intercal  [e^{(t'-\tau)\mathbf{F}}]^\intercal \text{d}\tau \quad \mbox{(It\^{o} Isometry; Proposition~\ref{prop:Ito_isometry})},
\end{talign*}
where we note that $\int_{t_0}^{t} e^{(t-\tau)\mathbf{F}}\mathbf{L}\text{d}B_{\tau}$ and $\int_{t_0}^{t'} e^{(t'-\tau)\mathbf{F}}\mathbf{L}\text{d}B_{\tau}$ are $\mathcal{F}_t$-measurable for $\mathcal{F}_t\subset\mathcal{F}_{t_0}^+$, and thus independent to $\bs(t_0)$.

Therefore for all $t\in\{t_0,\ldots,t_T\}$, if we solve for each $t_i$ with initial time $t_{i-1}$, then $s(t)\sim \mathcal{N}(m(t), k(t,t))$ with
\begin{talign*}
  m(t_{i+1})&= e^{(t_{i+1}-t_i)\bF} \bm_0\\
  k(t_{i+1},t_{i+1})&= e^{(t_{i+1}-t_i)\mathbf{F}}\mathbf{P}_0 [e^{(t_{i+1}-t_i)\mathbf{F}}]^\intercal
  +\int_{t_i}^{t_{i+1}}e^{(t_{i+1}-\tau)\mathbf{F}}\mathbf{L}\mathbf{Q}_c\mathbf{L}^\intercal  [e^{(t_{i+1}-\tau)\mathbf{F}}]^\intercal \text{d}\tau.
\end{talign*}
Denote $\Delta_{t_{i+1}}=t_{i+1}-t_i$, $\mathbf{A}_{i,i+1}=e^{(t_{i+1}-t_i)\mathbf{F}}=e^{\Delta_{t_{i+1}}\bF}$ and $\mathbf{Q}_{i,i+1}=\int_{t_0}^{\Delta_{t_{i+1}}+t_0}e^{(\Delta_{t_{i+1}}+t_0-\tau)\mathbf{F}}\mathbf{L}\mathbf{Q}_c\mathbf{L}^\intercal  [e^{(\Delta_{t_{i+1}}+t_0-\tau)\mathbf{F}}]^\intercal \text{d}\tau$. Then we can derive the recursive equations
\begin{talign*}
  \bs(t_{i+1})&=\mathbf{A}_{i,i+1}\bs(t_i)+\int_{t_i}^{t_{i+1}}e^{(t_{i+1}-\tau)\bF}\mathbf{L} \text{d}B_\tau\\
  &=\mathbf{A}_{i,i+1}\bs(t_i) + \int_{t_0}^{\Delta_{t_{i+1}}+t_0} e^{\Delta_{t_{i+1}}+t_0-\tau} \mathbf{L} \text{d}B_\tau\\ 
  &=\mathbf{A}_{i,i+1}\bs(t_i) + \mathbf{q}_{i},
\end{talign*}
where $\mathbf{q}_{i}\sim \mathcal{N}(0,\mathbf{Q}_{i,i+1})$.

\paragraph{Types of Kernels:} A large number of kernels do allow for the Markovian property to be satisfied. A full list of them could be found in \citet{solin_stochastic_2016, sarkka2019applied}. Here, we list 2 common kernels:

\begin{itemize}
    \item Matern-$\sfrac{3}{2}$: The kernel is $k(t,t')=\sigma^2(1+\frac{\sqrt{3}d}{\rho})\exp(-\frac{\sqrt{3}d}{\rho})$, where $d=|t-t'|$. With $\lambda=\frac{\sqrt{3}}{\ell}$:
    \begin{talign*}
        \bF = \begin{pmatrix}
            0&1 \\
            -\lambda^2 & -2\lambda
        \end{pmatrix},\quad \bm_0=\mathbf{0},\quad \bP_0 = \begin{pmatrix}
            \sigma^2&0 \\
            0 & \sigma^2\lambda^2
        \end{pmatrix},\quad \bQ_c=\frac{12\sqrt{3}\sigma^2}{\ell^2 },\quad \mathbf{L}=\begin{pmatrix}
            0\\
            1
        \end{pmatrix},\quad \bH=\begin{pmatrix}
            1\\
            0
        \end{pmatrix}.
    \end{talign*}

    % Qc = np.array([[12.0 * 3.0 ** 0.5 / self.lengthscale ** 3.0 * self.variance]])
    \item Matern-$\sfrac{5}{2}$: The kernel is $k(t,t')=\sigma^2(1+\frac{\sqrt{5}d}{\rho} + \frac{5d^2}{3\rho^2})\exp(-\frac{\sqrt{5}d}{\rho})$, where $d=|t-t'|$.  With $\lambda=\frac{\sqrt{5}}{\ell}$ and $\kappa=\frac{5 \sigma^2}{3\ell^2}$:    \begin{talign*}
        \bF = \begin{pmatrix}
            0&1 & 0\\
            0 & 0 & 1\\
            -\lambda^3 & -3\lambda^2 & -3\lambda
        \end{pmatrix},\quad \bm_0=\mathbf{0},\quad \bP_0 = \begin{pmatrix}
            \sigma^2&0 &\kappa \\
            0 &  \kappa & 0\\
            -\kappa & 0 & \frac{25\sigma^2}{\ell^4}
        \end{pmatrix},\quad \bQ_c=\frac{400 \sqrt{5}\sigma^2}{3\ell^5},\quad \mathbf{L}=\begin{pmatrix}
            0\\
            0\\
            1
        \end{pmatrix},\quad \bH=\begin{pmatrix}
            1\\
            0\\
            0
        \end{pmatrix}.
    \end{talign*}
\end{itemize}

\paragraph{Kalman Filtering and Smoothing:} The full Kalman filtering and smoothing algorithm is delineated in Algorithm~\ref{alg:filtering_smoothing}.
\begin{algorithm}[tbp]
\caption{Kalman Filtering and Smoothing}
\label{alg:filtering_smoothing}
\begin{algorithmic}[1]
\STATE {\bfseries Inputs:} Variational parameters $\tilde{\bY}_{1:T},\tilde{\bV}_{1:T}$. Initial conditions $\mathbf{m}_0^f=\mathbf{m}_0$, $\mathbf{P}_0^f=\mathbf{P}_0$, transition matrices $\{\mathbf{A}_{i-1,i}, \mathbf{Q}_{i-1,i}\}_{i=1}^T$ and emission matrix $\mathbf{H}$.
\STATE \textbf{Filtering}:
\FOR{$i=1,\ldots,T$}
\STATE Compute predictive filter distribution $p(\bs_t|\tilde{\bY}_{1:i-1})=N(\bs_t|\bm^p_t,\bP^p_t)$:
\begin{talign*}
  \bm^p_{t_i} &= \bA_{i-1,i} \bm_{t_{i-1}}^f \\
  \bP^p_{t_i} &= \bA_{i-1,i} \bP_{t_{i-1}}^f \bA_{i-1,i}^\intercal + \bQ_{i-1,i}.
\end{talign*}
\STATE Let $\Lambda_{t_i}=\bH \bP^p_{t_i} \bH^\intercal+\tilde{\bV}_{t_i}$. Compute log marginal likelihood:
\begin{talign*}
  \ell_{t_i} = \log\mathbb{E}_{p(\bs_{t_i}|\tilde{\bY}_{1:i-1})} N(\tilde{\bY}_{t_i}; \mathbf{H}\bs_{t_i}, \tilde{\mathbf{V}}_{t_i})= \log N(\tilde{\bY}_{t_i}; \mathbf{H}\bm^p_{t_i}, \Lambda_{t_i})
\end{talign*}
\STATE Compute updated filter distribution $p(\bs_{t_i}|\tilde{\bY}_{1:i})=N(\bs_{t_i}|\bm^f_{t_i},\bP^f_{t_i})$:
\begin{talign*}
  \mathbf{W}_{t_i} = \bP_{t_i}^p \bH^\intercal \Lambda_{t_i}^{-1}\Rightarrow \bm_{t_i}^f =\bm_{t_i}^p + \mathbf{W}_{t_i}(\tilde{\bY}_{t_i}-\bH\bm_{t_i}^p) ,\quad \bP_{t_i}^f= \bP_{t_i}^p - \mathbf{W}_{t_i}\Lambda_{t_i}\mathbf{W}_{t_i}^\intercal.
\end{talign*}
\ENDFOR
\STATE \textbf{Smoothing}, with initial conditions $\bm_T^s=\bm_T^f$, $\bP_T^s=\bm_T^f$:
\FOR{$i=T-1,\ldots, 1$}
\STATE Compute the smoothing distribution $q(\bs_{t_i})=p(\bs_{t_i}|\tilde{\bY}_{1:T})=N(\bs_{t_i}|\bm^s_{t_i},\bP^s_{t_i})$ with the RTS smoother:
\begin{talign*}
  \mathbf{G}_{t_i} = \bP_{t_i}^{f}\bA_{i,i+1}[\bP_{t_{i+1}}^p]^{-1}\Rightarrow \bm_{t_i}^s = \bm_{t_i}^f + \mathbf{G}_{t_i}(\bm_{t_{i+1}}^s - \bm_{t_{i+1}}^p),\quad \bP_{t_i}^s = \bP_{t_i}^f + \mathbf{G}_{t_i}(\bP_{t_{i+1}}^{s} - \bP_{t_{i+1}}^p)\mathbf{G}_{t_i}^\intercal
\end{talign*}
\ENDFOR
\vspace{1mm}
\STATE \textbf{Return:} log marginal likelihood $\sum_{t=1}^T \ell_{t_i}$, marginal posteriors $\{q(\bs_{t_i})\}_{i=1}^{T}$.
\end{algorithmic}
\end{algorithm}

\subsection{Further details on spatiotemporal MGPVAE}
\label{appendix:spatiotemporal}
To perform prediction at arbitrary spatiotemporal locations $(r_*,t_*)$, we can use a conditional independence property proven in \citet{tebbutt_combining_2021} for separable spatiotemporal Markovian GPs that yields the predictive distribution
\begin{talign}
\label{eqn:spatial_marginal}
&p(\bZ(r_*, t_*)|\bY)\approx q(\bZ(r_*, t_*)) \\
:=&\int p(\bZ(r_*, t_*)|\bs(\mathbf{R}, t_*))
q(\bs(\mathbf{R},t_*)|\bs(\mathbf{R}, t_{1:T}))\text{d}\bs(\mathbf{R}, t_*), \nonumber
\end{talign}
where $q(\bs(\mathbf{R}, t_{1:T}))=N(\bs(\mathbf{R}, t_{1:T})|m(\mathbf{R}, t_{1:T}), c(\mathbf{R}, t_{1:T}))$ is the approximate state posterior $s(\mathbf{R}, t_*)$ conditioned on $\mathbf{s}(\mathbf{R}, t_{1:T})$. Given in \citet{tebbutt_combining_2021} and \citet{wilkinson_sparse_2021},
\begin{talign*}
  p(\bZ(r_*, t_*)|\bs(\mathbf{R}, t_*))=&N(\bZ(r_*,t_*)|\mathbf{B}_{r_*}\bs(\mathbf{R},t_*), \mathbf{C}_{r_*}),
\end{talign*}
where 
\begin{talign*}
  \mathbf{B}_{r_*}&= [\mathbf{K}^r_{r_*\mathbf{R}}(\mathbf{K}^r_{\mathbf{R}\mathbf{R}})^{-1}]\otimes[\mathbf{L}^r_{\mathbf{R}\mathbf{R}}\otimes \bH^t]\\
  \mathbf{C}_{r_*}&=k_{t}(0,0)[\mathbf{K}_{r_*r_*}^r - \mathbf{K}^r_{r_*\mathbf{R}}(\mathbf{K}^r_{\mathbf{R}\mathbf{R}})^{-1}\mathbf{K}^r_{\mathbf{R}r_*}].
\end{talign*}
Therefore the integral in Equation~(\ref{eqn:spatial_marginal}) yields $q(\bZ(r_*, t_*))=N(\bZ(r_*, t_*)|\mathbf{B}_{r_*}m(t_{1:T}, \mathbb{R}), \mathbf{B}_{r_*}c(t_{1:T, \mathbb{R}})\mathbf{B}_{r_*}^\intercal + \mathbf{C}_{r_*})$.

\section{Additional Experimental and Implementation Details}
\label{appendix:experiments}
For the generating modelling tasks of corrupt image (re)-generation, the NLL on an test sequence $\bY=(\bY_1,\ldots,\bY_{100})$ is
\begin{talign}
\label{eqn:nll_seen}
  \log p(\bY)&=\log \int p(\bY|\bZ) p(\bZ) \text{d}\bZ \nonumber\\
  &= \log \int p(\bY|\bZ) \frac{p(\bZ)}{q(\bZ|\bY_{\text{corrupt}})} q(\bZ|\bY_{\text{corrupt}})\text{d}\bZ \nonumber\\
  &=\log \frac{1}{K}\sum_{k=1}^K p(\bY|\bZ_k) \frac{p(\bZ_k)}{q(\bZ_k|\bY_{\text{corrupt}})},\quad \bZ_k\sim q(\bZ|\bY_{\text{corrupt}}),\quad k=1\ldots,K \nonumber\\
  &= \log\frac{1}{K} + \text{logsumexp}_{k=1,\ldots,K}\bigg[ \log p(\bY|\bZ_k) - \log\frac{q(\bZ_k|\bY)}{p(\bZ_k)} \bigg] .
\end{talign}
For the tasks of missing frame imputation, given test sequence $\bY=(\bY_1,\ldots,\bY_{100})$, write $\bY\equiv(\bY,\bY^c)$, where $\bY^c$ are the unseen frames and $\bY^c$ are seen frames.
\begin{talign*}
  \log p(\bY)&\equiv\log p(\bY^c | \bY) + \log p(\bY).
\end{talign*}
$\log p(\bY)$ can be computed by Equation~\ref{eqn:nll_seen} and 
\begin{talign*}
\log p(\bY^c| \bY) &= \log\int p(\bY^c|\bZ^c) p(\bZ^c|\bY)  \text{d}\bZ^c \\
&=\log\int p(\bY^c|\bZ^c) \bigg[ \int p(\bZ^c|\bZ)p(\bZ|\bY)\text{d}\bZ \bigg] \text{d}\bZ^c \\
&=\log\int p(\bY^c|\bZ^c) \bigg[ \underbrace{\int p(\bZ^c|\bZ)\underbrace{q(\bZ|\bY)}_{\text{encoder}}\text{d}\bZ}_{=:q(\bZ^c|\bY)\approx p(\bZ^c|\bY)} \bigg] \text{d}\bZ^c \\
&\approx \log\int p(\bY^c|\bZ^c) q(\bZ^c|\bY) \text{d}\bZ^c \\
&= \log \frac{1}{K}\sum_{k=1}^K p(\bY^c|\bZ^c_k), \quad \bZ_k^c\sim q(\bZ^c|\bY),\quad k=1\ldots,K,
\\
&= \log \frac{1}{K} + \text{logsumexp}_{k=1,\ldots,K} \log p(\bY^c|\bZ^c_k),
\end{talign*}
$q(\bZ^c|\bY)$ can be analytically obtained (e.g. GP posterior distribution). For VRNN/KVAE, $q(\bZ^c|\bY)$ would just be determined by the outputs of the RNN at each time step $t=1,\ldots,100$.

For the spatiotemporal modelling task, we compute the negative log predictive distribution (NLPD) at new spatial locations $s^*\notin\mathcal{S}$ and temporal locations $t\in\mathcal{T}$ observed during training, conditioned on observed data $\mathcal{D}=\{\bY(s,t)\}_{s\in \mathcal{S}, t\in\mathcal{T}}$, which is 
\begin{talign*}
\log p(\bY(t,s_*)|\mathcal{D}) &= \log \int p(\bY(t,s_*) | \bZ(t,s_*)) p(\bZ(t,s_*) | \mathcal{D}) \text{d}\bZ(t,s_*) \\ 
&\approx \log \int p(\bY(t,s_*) | \bZ(t,s_*)) q(\bZ(t,s_*) | \mathcal{D}) \text{d}\bZ(t,s_*) \\
&= \log \frac{1}{K}\sum_{k=1}^K  p(\bY(t,s_*)|\bZ_k(t,s_*)), \quad \bZ_k(t,s_*)\sim q(\bZ(t,s_*)|\mathcal{D}),\quad  k=1,\ldots,K, \\
&= \log\frac{1}{K}+\text{logsumexp}_{k=1,\ldots,K} \log p(\bY(t,s_*)|\bZ_k(t,s_*)).
\end{talign*}

We hereby provide a derivation of Equation \ref{eqn:nll_seen} for each model (using the original notation as much as possible).

\paragraph{MGPVAE:}
\begin{talign*}
\log p(\bY) &= \log \int p(\bY|\bs) \frac{p(\bs)}{q(\bs)} q(\bs)  \text{d}\bs,\\
&= \log \int p(\bY|\bs) \frac{p(\bs)  \int p(\tilde{\bY}|\bH\bs, \tilde{\bV}) p(\bs)\text{d}\bs}{p(\tilde{\bY}|\bH\bs, \tilde{\bV})p(\bs)} q(\bs)  \text{d}\bs,\\
&\approx \log\frac{1}{K} + \log \sum_{k=1}^K p(\bY|\bs) \frac{\int p(\tilde{\bY}|\bH\bs_k, \tilde{\bV}) p(\bs_k)\text{d}\bs}{p(\tilde{\bY}|\bH\bs_k, \tilde{\bV})},\quad \bs_k\sim q(\bs_k) \\
&= \log \frac{1}{K} + \text{logsumexp}_{k=1,\ldots,K} \log p(\bY|\bs_k) - \log p(\tilde{\bY}|\bH\bs_k,\tilde{\bV}) + \log\mathbb{E}_{p(\bs)} N(\tilde{\bY}| \mathbf{H}\bs, \tilde{\bV}) 
\end{talign*}
\paragraph{KVAE:}
When encountering a missing frame, the Kalman gain is zero in the Kalman filtering and smoothing operations. The filter prediction distribution is then used to recompute the $A$ and $C$ matrices, which are finally fed back into the usual filtering algorithm.

\begin{talign*}
    \log p(\bY) &= \log \int \frac{p(\bY|\ba) p(\ba|\bZ) p(\bZ) q(\ba,\bZ|\bY)}{q(\ba,\bZ|\bY)} \text{d}\ba \text{d}\bZ,\quad  \bigg[q(\ba,\bZ|\bY)=q(\ba|\bY)p(\bZ|\ba) \bigg]\\
    &\approx \log \sum_{k=1,\ldots K} \frac{p(\bY|\ba_k)p(\ba_k|\bZ_k)p(\bZ_k)}{q(\ba_k|\bY_k)p(\bZ_k|\bY_k)} + \log\frac{1}{K},\quad (\ba_k,\bZ_k)\sim q(\ba_k,\bZ_k|\bY) \\
    &= \log\frac{1}{K} + \text{logsumexp}_{k=1,\ldots,K} \log p(\bY|\ba_k) - \log q(\ba_k|\bY) + \log p(\ba_k|\bZ_k) + \log p(\bZ_k) - \log p(\bZ_k|\ba_k).
\end{talign*}

\paragraph{VRNN:}
When encountering a missing frame, the previous hidden state is fed into prior network and $\bZ_t$ is sampled from the prior. It is then decoded and the decoded output is then used as a pseudo-datapoint for autoencoding. 

\begin{talign*}
\log p(\bY) &= \log \prod_{t=1}^T p(\bY_t|\bY_{<t})\\
&= \log \int \prod_{t=1}^T p(\bY_t|\bY_{<t}, \bZ_{\leq t}) \frac{p(\bZ_t|\bY_{<t}, \bZ_{<t})}{q(\bZ_t|\bY_{\leq t}, \bZ_{<t})} q(\bZ_t|\bY_{\leq t}, \bZ_{<t}) \text{d}\bZ,\\
&\approx \log \frac{1}{K} + \log \sum_{k=1}^K  \prod_{t=1}^T p(\bY_t|\bY_{<t}, \bZ^k_{\leq t}) \frac{p(\bZ_t^k|\bY_{<t} \bZ^k_{<t})}{q(\bZ^k_t|\bY_{\leq t}, \bZ^k_{<t})},\quad \bZ_{1:T}^k\sim q(\bZ_{\leq T}|\bY_{\leq T}) = \prod_{t=1}^T q(\bZ_t|\bY_{\leq t}, \bZ_{<t}) \\
&= \log \frac{1}{K} + \text{logsumexp}_{k=1,\ldots,K} \sum_{t=1}^T \log p(\bY_t|\bY_{<t}, \bZ^k_{\leq t}) -  \log \frac{q(\bZ_t^k|\bY_{\leq t} \bZ^k_{<t})}{p(\bZ^k_t|\bY_{< t}, \bZ^k_{<t})}
\end{talign*}

\paragraph{LatentODE:}
The latentODE places a prior over the initial conditions $\bZ_0\sim p(\bZ_0)$ and a posterior via ODERNN $q(\bZ_0|\bY_{1:T})\equiv q(\bZ_0)$. Thus the NLL is
\begin{talign*}
  \log p(\bY) &= \log \int p(\bY|\bZ_0) \frac{p(\bZ_0)}{q(\bZ_0)} q(\bZ_0) d\bZ \\
  &\approx \log \frac{1}{K} + \text{logsumexp}_{k=1,\ldots,K} \log p(\bY|\bZ_0^k) - \log \frac{q(\bZ_0^k)}{p(\bZ_0^k)},\quad \bZ_0^k\sim p(\bZ_0).
\end{talign*}
\paragraph{GPVAE:}

\begin{talign*}
  \log p(\bY) &= \log \int p(\bY|\bZ) \frac{p(\bZ)}{q(\bZ)} q(\bZ) d\bZ \\
  &\approx \log \frac{1}{K} + \text{logsumexp}_{k=1,\ldots,K} \log p(\bY|\bZ_k) - \log\frac{q(\bZ_k|\bY)}{p(\bZ_k)},\quad \bZ_k\sim q(\bZ_k|\bY).
\end{talign*}

\paragraph{SVGPVAE:} We are given that $q(\bZ,\bG_m)=p(\bZ|\bG_m)q(\bF_m|\mu,\bA)$, where $\bG_m$ is the inducing variable and $(\mu, \bA)$ are the inducing variable mean and covariance. 
\begin{talign*}
\log p(\bY) &= \log \int p(\bZ,\bG_m,\bY) \frac{q(\bZ,\bG_m)}{q(\bZ,\bG_m)} \text{d}\bZ \text{d}\bG_m \\
&=\log \int \frac{p(\bY|\bZ)p(\bZ|\bG_m)p(\bG_m)}{p(\bZ|\bG_m)q(\bG_m)} p(\bZ|\bG_m)q(\bG_m) \text{d}\bZ \text{d}\bG_m\\
&=\log \int \frac{p(\bY|\bZ)p(\bG_m)}{q(\bG_m)} p(\bZ|\bG_m)q(\bG_m) \text{d}\bZ \text{d}\bG_m\\
&\approx\frac{1}{P}\sum_{p=1}^P \log\frac{1}{K} + \text{logsumexp}_{k=1,\ldots,K} \log p(\bY|\bZ_k^p) - \log \frac{q(\bG_m^k)}{p(\bG_m^k)},\quad \bG_m^k\sim q(\bG_m)\quad \bZ_k^p\sim p(\bZ_k|\bG_m^k),
\end{talign*}
where $p(\bZ_k|\bG_m)$ is a standard GP conditional distribution:
\begin{align*}
    p(\bZ_k|\bG_m)&\sim N\bigg(\bZ_k | \bK_{xm}\bK_{mm}^{-1}\bG_m, \bK_{xx} - \bK_{xm}\bK_{mm}^{-1}\bK_{mx} \bigg), \\
    q(\bG_m)\equiv q(\bG_m|\mu, \bA)&\sim N\bigg(\bG_m | \mu, \bA \bigg).
\end{align*}

\subsection{Video Data}
For each of the 4000 train and 1000 test MNIST images, we create a $T=100$ length sequence of rotating MNIST images by applying a clockwise rotation with period $t=50$, meaning that the image gets rotated twice. For corrupt video sequences, we randomly remove 60$\%$ of the pixels and replace them with the value 0. For the missing frames imputation task, we randomly remove 60$\%$ of the frames and keep track of the time steps for which the frames are removed.

For each model during training, we used the following configurations (note that we use the standard PyTorch-TensorFlow-JAX notation for network layers).
\begin{itemize}
  \item Batch size: 40
  \item Training epochs: 300
  \item Number of latent channels: $L=16$
  \item Adam optimizer learning rate: 1e-3
  \item clipGradNorm(model parameters, 100)
  \item Encoder structure: Conv(out=32, k=3, strides=2), ReLU(), Conv2D(out=32, k=3, strides=2), ReLU(), Flatten(), hiddenToVariationalParams(), where hiddenToVariationalParams() depends on the model.
  \item Decoder structure: Linear($L$, 8*8*32), Reshape((8,32,32)), Conv2DTranspose(out=64, k=3, strides=2, padding=same), ReLU(), Conv2DTranspose(out=32, k=3, strides=2, padding=same), ReLU(), Conv2DTranspose(out=1, k=3, strides=1, padding=same), Reshape((32, 32, 1)).
  \item If KVAE or VRNN, we had to applying KL and Adam optimizer step size schedulers in order to make them work, meaning that they are more intricately optimised than the other models.
  \item If GPVAE and MGPVAE, initialise the kernel lengthscales with 40 for each latent channel. The kernel hyperparameters (lengthscales and scales) are subsequently optimised via the ELBO.
  \item The model at the last training epoch is used for test evaluation.
\end{itemize}

\textbf{Remark:} We hereby explain why GPVAE \citep{fortuin_gp-vae_2020} is not practically suited for the missing frames imputation task. Recall that their approximate posterior is defined as $q(z_{1:T}|x_{1:T})=N(m_j, \Lambda_j^{-1})$, where $\Lambda_j=B_j^\intercal B_j$ and $B_j$ is an upper triangular band matrix, parameterised by outputs from the encoder. For a batch of sequences with varying lengths after removing missing frames, $B_j$ and hence $\Lambda_j$ will be padded with zeros (assuming we rearrange the orders of the time points). This is because most mainstream deep learning libraries (e.g. JAX, PyTorch and TensorFlow) all require static shape arrays/tensors and batch operations can only be done when all the arrays/tensors are of the same shape. However, we are not aware of any computationally efficient methods to “invert” a batch of $\Lambda_j$’s with varying 0-padding sizes. Whilst it is still feasible to implement GPVAE for missing data imputation tasks, either the batch size will have to be 1 or we will need an extra for loop to iterate through each of the sequences in each batch during training. This severely constrains the practical computational efficiency of GPVAE, rendering it unsuitable as a missing frames imputation model.

\textbf{Implementation Details:} VRNN and KVAE are implemented in PyTorch. GPVAE mostly a non-modified version as the original TensorFlow implementation in \citet{fortuin_gp-vae_2020}. MGPVAE is implemented in JAX using Objax \citep{objax2020github}. One issue in Objax is that the convolutional operator only supports float32 precision (otherwise the implementation will be very slow) and therefore we need to manually cast the CNN weights and inputs into float32 (from float64). This is a slight disadvantage of MGPVAE that hopefully can be resolved in the future through further advances in JAX-based CNN implementations.

During test time, we used $K=20$ latent samples to compute the NLL and RMSE (via the posterior mean). The NLL and RMSE of the predictive results on the test set are calculated over the entire sequence.

\begin{figure}[t]
\centering
\begin{minipage}{\textwidth}
\centering
\includegraphics[width = \columnwidth]{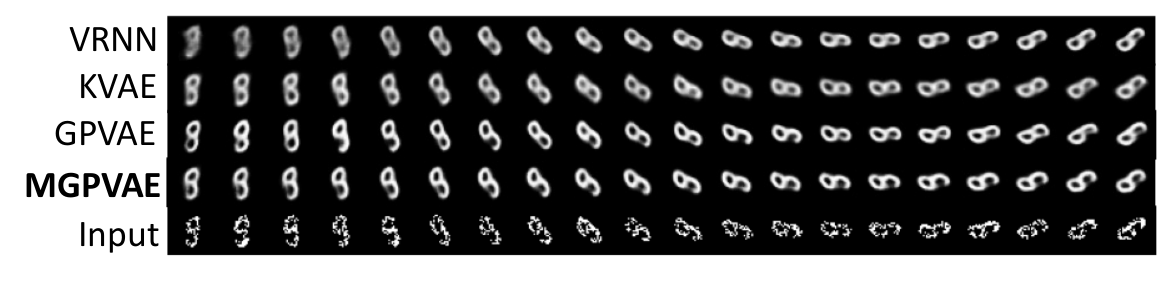}
\includegraphics[width = \columnwidth]{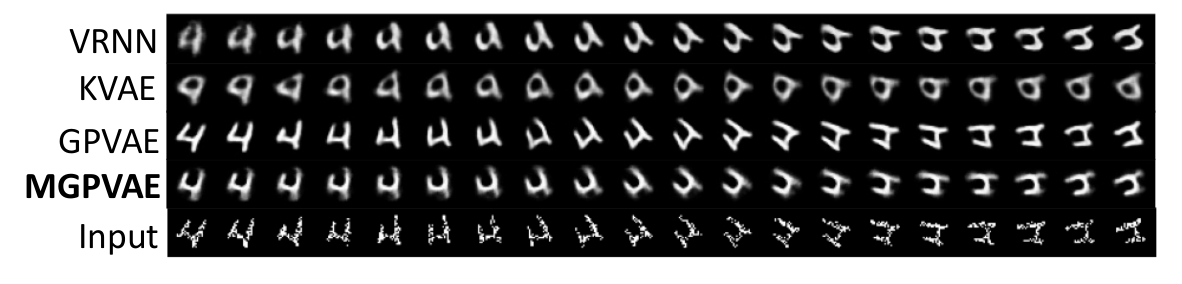}
\end{minipage}
\caption{Additional corrupt frames imputation results.}
\end{figure}

\begin{figure}[t]
\centering
\begin{minipage}{\textwidth}
\centering
\includegraphics[width = \columnwidth]{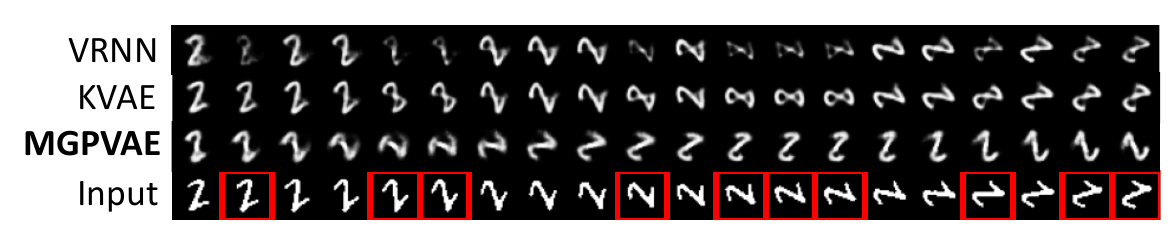}
\includegraphics[width = \columnwidth]{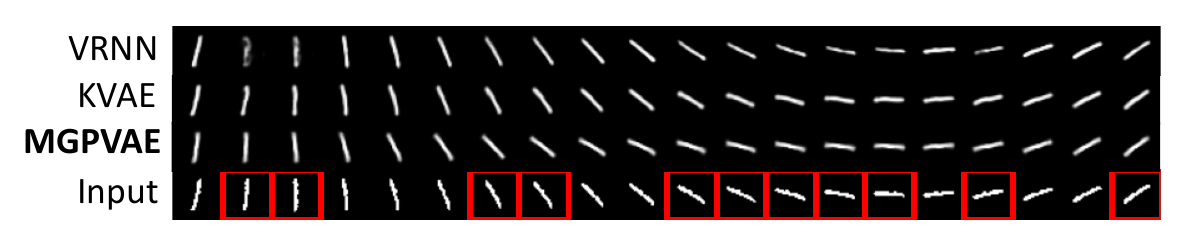}
\end{minipage}
\caption{Additional missing frames imputation results.}
\end{figure}

\subsection{Mujoco Action Data}
\label{appendix:action}
We create missing time steps by randomly dropping out $60\%$ of them.

For each model during training, we used the following configurations (note that we use the standard PyTorch-TensorFlow-Jax notation for network layers). Note that unlike \citet{rubanova2019latent}, we do not use a masked autoencoder approach for training and only use the non-missing time steps to compute the ELBO.
\begin{itemize}
  \item Batch size: 16
  \item Training epochs: 1000 if $T=100$ and 500 if $T=1000$.
  \item Number of latent channels: $L=15$
  \item Adam optimizer learning rate: 1e-3.
  \item clipGradNorm(model parameters, 100)
  \item Encoder structure: Linear($L$, 32)-ReLU-Linear(32, L). If latentODE then slightly different but similar due to the ODE-RNN.
  \item Decoder structure: Linear($L$, 16)-ReLU-Linear(16, 14).
  \item If KVAE or VRNN, we had to applying KL and Adam optimizer step size schedulers in order to make them work, meaning that they are more intricately optimised than the other models. LatentODE also used a KL scheduler.
  \item If SVGPVAE or MGPVAE, initialise the kernel lengthscales with: 5 if $T=100$ and 50 if $T=1000$. The kernel hyperparameters (lengthscales and scales) are subsequently optimised via the ELBO.
  \item SVGPVAE, VRNN, latentODE and MGPVAE we use early stopping with the validation RMSE on 320 if $T=100$ else 80 validation sequences (not part of train or test sets). For KVAE and CRU, adding cross-validation is practically too time consuming and so the model at the last training epoch is used for test evaluation.
\end{itemize}

\textbf{Remark:} CRU \citep{schirmer2022modeling} is capable of modelling continuous data in the form $(t_i, y_i)_{i=0}^N$, but, unlike latentODE, SVGPVAE or MGPVAE, it is unable to condition on $(t_i, y_i)_i$ and then predict at any time step $t\in (t_0, t_N)$. Therefore to train the CRU for our task, we have to treat it the same way as VRNN and KVAE, where we perform Kalman filtering in regular time steps and then mask out the unobserved time steps during training. Furthermore, this model is trained via maximum likelihood estimation, and so we only report the test RMSE metric.

\textbf{Implementation Details:} VRNN, KVAE, latentODE, SVGPVAE and CRU are implemented in PyTorch. For latentODE and CRU, they are mostly unmodified, based off the original author's implementations. We rewrite SVGPVAE using the more efficient framework of Functorch \citep{functorch2021}, which we find to give better implementation efficiency and performance than the original TensorFlow implementation. MGPVAE is implemented with Objax \citet{objax2020github} within JAX.

During test time, we use $K=20$ latent samples to compute the NLL and RMSE (via the posterior mean). The NLL and RMSE of the predictive results on the test set are calculated over the entire sequence.

\begin{figure}[tbp]
\centering
\begin{minipage}{\textwidth}
\centering
\includegraphics[width = \columnwidth]{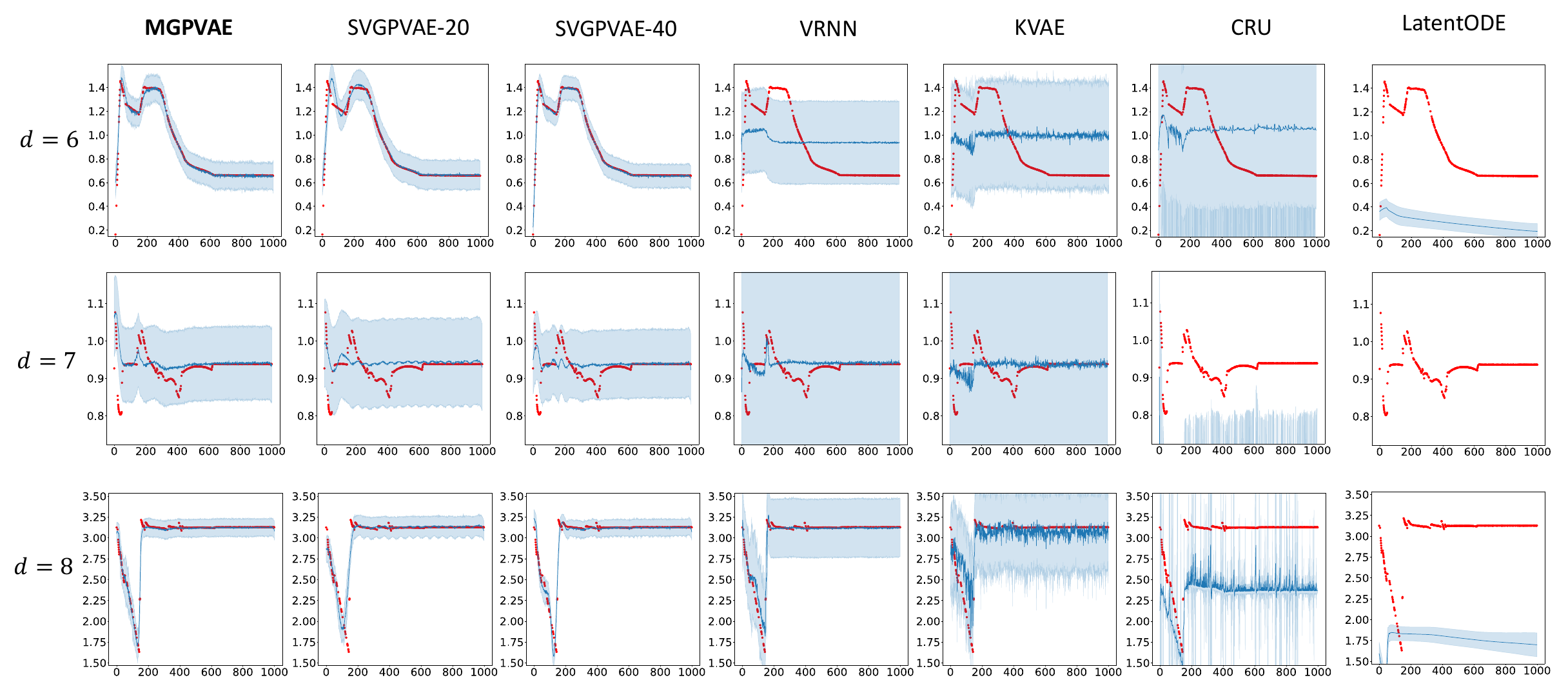}
\includegraphics[width = \columnwidth]{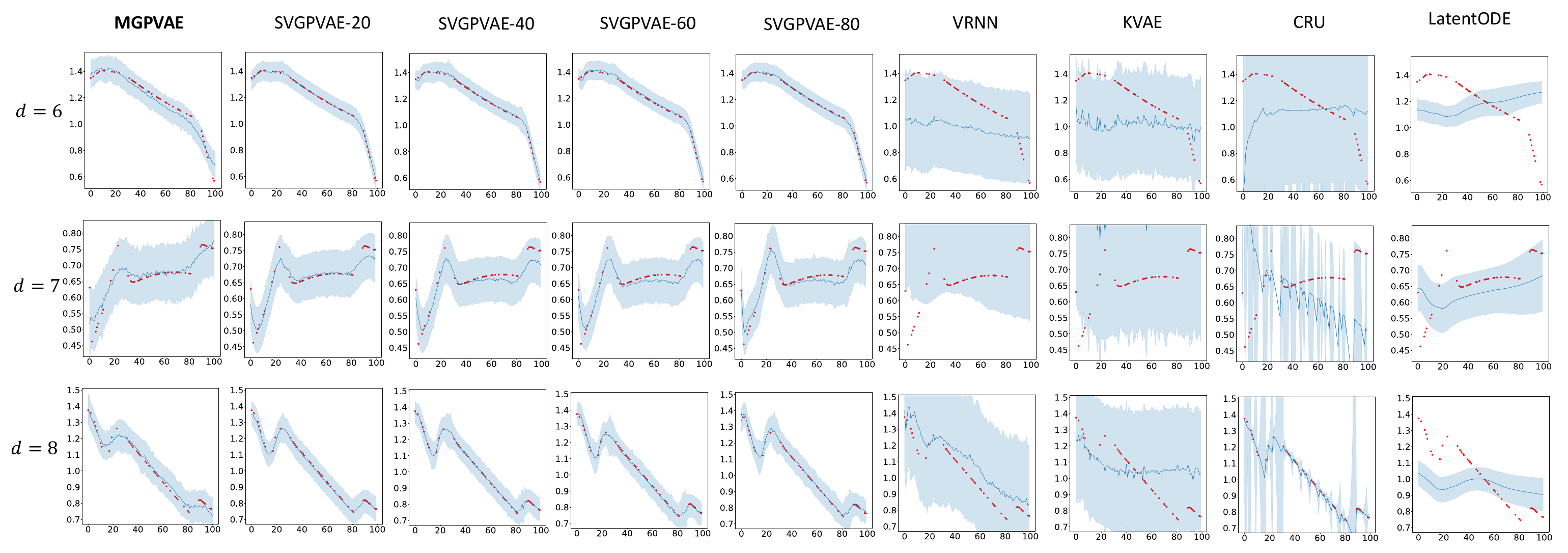}
\end{minipage}
\caption{Additional results figures for unseen Mujoco sequences. Note that some predictions are not showing as they fall outside the graph limits.}
\label{fig:additional_fig_mujoco}
\end{figure}
\newpage

\subsection{Spatiotemporal Data}
We download the ECMWF ERA5 data (ECMWF/ERA5/DAILY) from Google Earth Engine API \citep{gorelick2017google}. The data vectors contain: 
\begin{itemize}
  \item Mean 2m air temperature 
  \item Minimum 2m air temperature 
  \item Maximum 2m air temperature 
  \item Dewpoint 2m air temperature 
  \item Surface pressure 
  \item Mean sea level pressure 
  \item $u$ component of wind 10m
  \item $v$ component of wind 10m
\end{itemize}
The spatial locations are represented by [longitude, latitude, elevation]. We index time by $0,1,\ldots, 100$.

The model configurations for SVGPVAE and MGPVAE are:
\begin{itemize}
  \item clipGradNorm(model parameters, 100)
  \item Encoder structure: Linear($L$, 16)-ReLU-Linear(16, L). 
  \item Decoder structure: Linear($L$, 16)-ReLU-Linear(16, 14).
  \item The model at the last training epoch is used for test evaluation.
\end{itemize}

\textbf{Implementation Details:} MOGP is implemented using GPFlow \citep{matthews2017gpflow} within TensorFlow. SVGPVAE is again implemented in PyTorch with Functorch. MGPVAE is implemented with Objax within JAX.

\paragraph{MOGP and SVGPVAE:} We stack time and space together in the dataset, and have inducing points over space-time jointly. During training, we use only 1 latent sample to compute the ELBO for SVGPVAE, a minibatch size of 40 and 200 epochs over the entire dataset (this gives roughly 20,000 gradient steps). We use an Adam optimizer learning rate of 1e-3. Similary for MOGP, we train over 20000 epochs of minibatches of size 40.

\paragraph{MGPVAE:} We don't do any minibatching and directly use the entire training dataset, which is of dimension roughly 60-100-8. During training, we compute the ELBO with 20 latent samples and train over 20000 epochs. We use an Adam optimizer learning rate of 1e-3.

\paragraph{Prediction:} We compute the NLL with 20 latent samples over a test set of unseen spatial locations at all time steps.

\begin{figure}[tbp]
\centering
\begin{minipage}{\textwidth}
\centering
\includegraphics[width = \columnwidth]{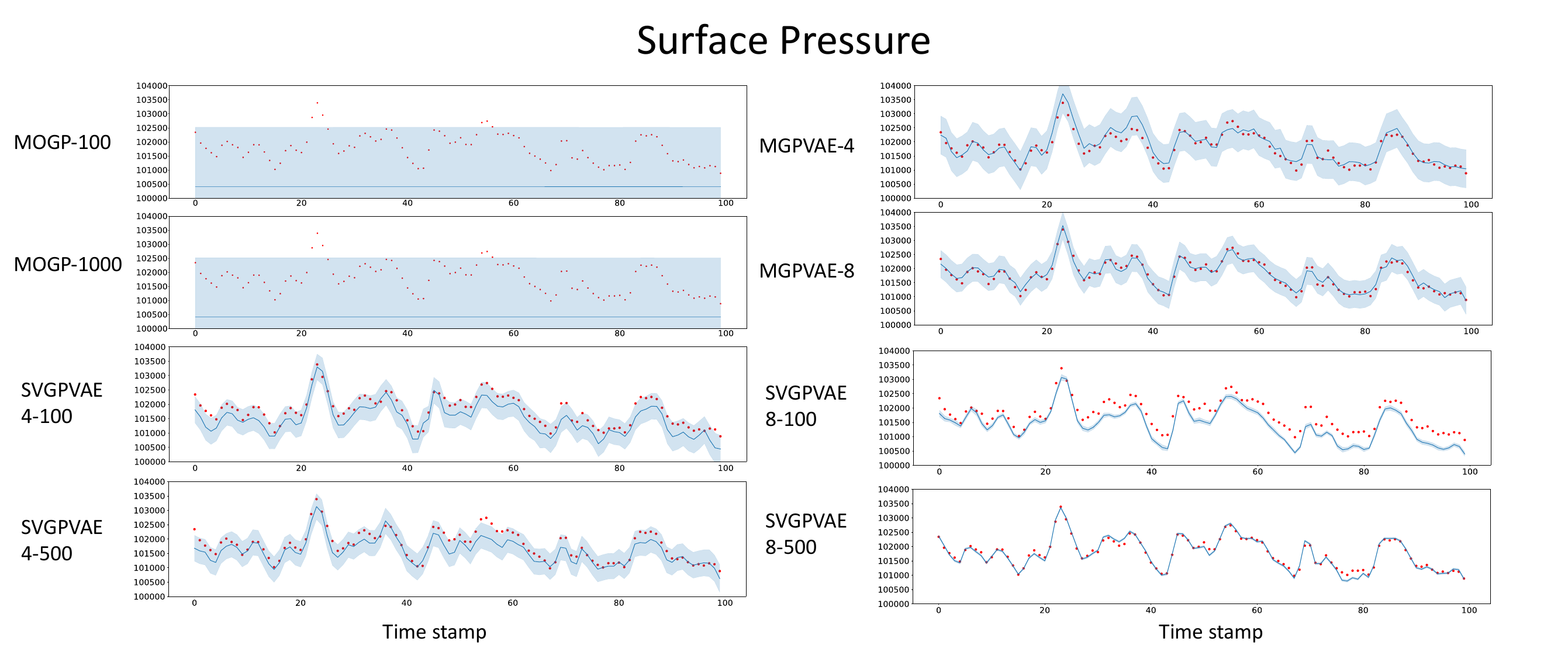}
\includegraphics[width = \columnwidth]{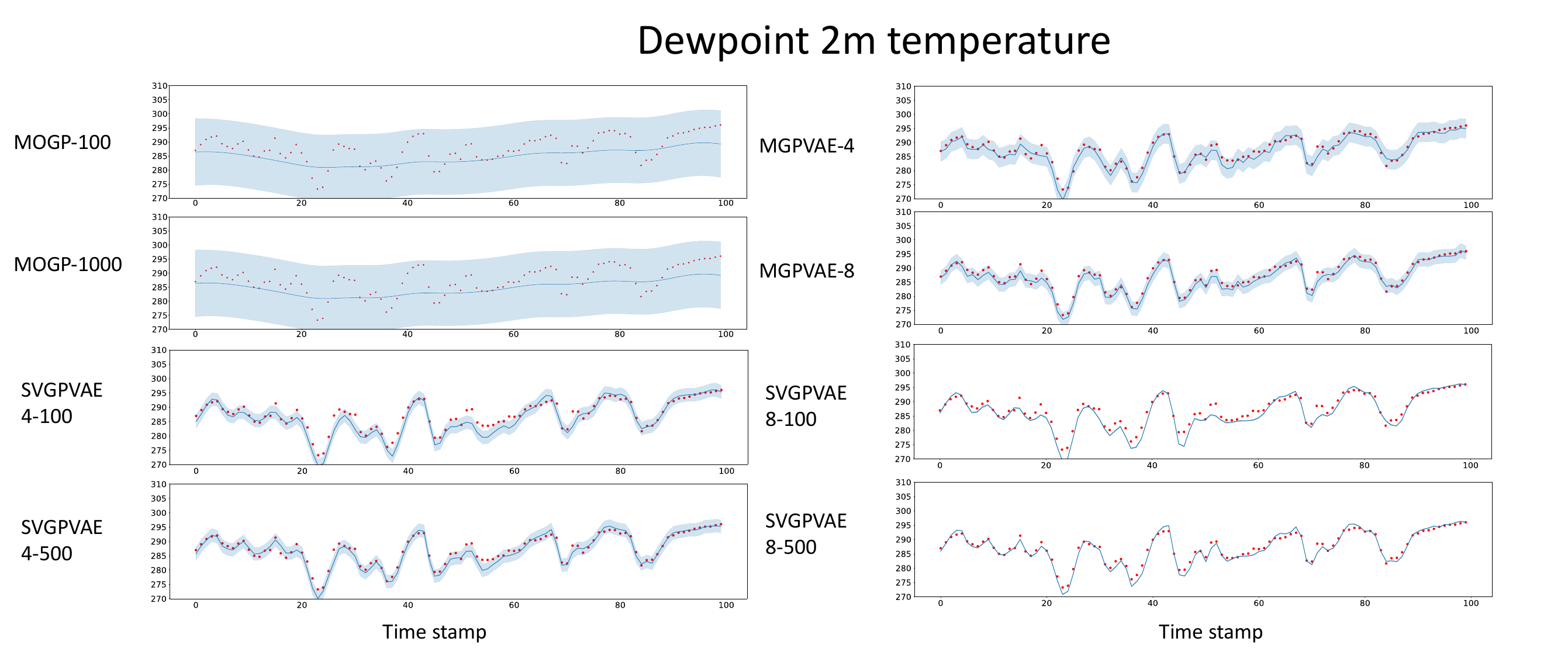}
\end{minipage}
\caption{Additional figures of the results for the ERA5 experiment given fixed spatial locations.}
\end{figure}

\section{Possible Future Developments}
It is possible to extend MGPVAE to incorporate temporal and spatial inducing points, by the virtue of the works of \citet{adam_doubly_2020} and \citet{hamelijnck_spatio-temporal_2021}. The main challenge would be forming the likelihood approximation, which would have to rely on set encoders such as DeepSets \citep{zaheer2017deep} or Set Transformer \citep{lee2019set} to output approximate likelihoods factorised over inducing points (instead of over time points). For temporal data, the main challenge would be to cluster the time points together for each inducing time location. For spatiotemporal data, the clustering problem extends to space-time clusters.

The state space may also be enriched via normalising flows \citep{lin2023towards, maronas2022efficient}, which may additionally enrich the latent dynamics or simplify the learning dynamics. Another approach to enforcing global properties without GPs would be via a GAM-type model, such as that of Prophet \citep{taylor2018forecasting}.

\end{document}